\newtheorem{theorem}{Theorem}[section] 
\newtheorem{definition}[theorem]{Definition} 
\newtheorem{lemma}{Lemma}
\definecolor{lime}{HTML}{A6CE39}
\DeclareRobustCommand{\orcidicon}{
	\begin{tikzpicture}
	\draw[lime, fill=lime] (0,0) 
	circle [radius=0.16] 
	node[white] {{\fontfamily{qag}\selectfont \tiny ID}};
	\draw[white, fill=white] (-0.0625,0.095) 
	circle [radius=0.007];
	\end{tikzpicture}
	\hspace{-2mm}
}
\begin{document}

\title{DeltaGNN: Graph Neural Network
with Information Flow Control}

\author{Kevin Mancini \index{Mancini, Kevin} and Islem Rekik\orcidA{}\index{Rekik, Islem}\thanks{ {corresponding author: i.rekik@imperial.ac.uk, \url{http://basira-lab.com}, GitHub: \url{http://github.com/basiralab}. }}, \textit{Member, IEEE}}


\maketitle

\begin{abstract}
Graph Neural Networks (GNNs) are popular deep learning models designed to process graph-structured data through recursive neighborhood aggregations in the message passing process. When applied to semi-supervised node classification, the message-passing enables GNNs to understand short-range spatial interactions, but also causes them to suffer from over-smoothing and over-squashing. These challenges hinder model expressiveness and prevent the use of deeper models to capture long-range node interactions (LRIs) within the graph. Popular solutions for LRIs detection are either too expensive to process large graphs due to high time complexity or fail to generalize across diverse graph structures. To address these limitations, we propose a mechanism called \emph{information flow control}, which leverages a novel connectivity measure, called \emph{information flow score}, to address over-smoothing and over-squashing with linear computational overhead, supported by theoretical evidence. Finally, to prove the efficacy of our methodology we design DeltaGNN, the first scalable and generalizable approach for detecting long-range and short-range interactions. 
We benchmark our model across 10 real-world datasets, including graphs with varying sizes, topologies, densities, and homophilic ratios, showing superior performance with limited computational complexity. The implementation of the proposed methods are publicly available at \href{https://github.com/basiralab/DeltaGNN}{https://github.com/basiralab/DeltaGNN}.
\end{abstract}

\begin{IEEEkeywords}
Deep learning, GNN, Oversmoothing, Oversquashing, Long-range interactions.
\end{IEEEkeywords}

\section{Introduction}
\IEEEPARstart{G}{NN}s are machine learning models designed to process graph-structured data~\citep{scarselli2008graph}. They have proven effective in various graph-based downstream tasks~\citep{GCN,surveyGNN}, especially in semi-supervised node representation learning ~\citep{hu2019hierarchical}. GNNs also demonstrate broad applicability across several distinct domains, including chemistry ~\citep{NPMS,PPI}, and medical fields ~\citep{GCNmedical}, such as network neuroscience ~\citep{basira-neuroscience}, and medical image segmentation for purposes such as  liver tumor and colon pathology classification~\citep{medmnistv2}. 

The strength of GNN models lies in their ability to process graph-structured data and capture short-range spatial node interactions through local neighborhood aggregations during message passing. However, such local aggregation paradigm often struggles to capture dependencies between distant nodes, particularly in certain graph densities and structures. For instance, when processing a strongly clustered graph, a standard GNN model may fail to account for interactions between nodes in distant clusters. These long-range interactions (LRIs) are crucial for node classification tasks, as they help distinguish between different classes and improve classification accuracy. This limitation has been widely studied, with theoretical findings pinpointing over-smoothing~\citep{li2018deeper} and over-squashing~\citep{alon2020bottleneck} as the root causes. These phenomena limit performance in many applications and prevent the use of deep GNNs to effectively capture long-range dependencies.

To address these challenges, recent works have proposed enhanced GNN models by integrating graph transformer-based modules, such as global attention mechanisms~\citep{wu2021representing}, or local-global attention~\citep{fei2023gnn}. Other works have proposed topological pre-processing techniques, such as curvature-enhanced edge rewiring~\citep{nguyen2023revisiting} and sequential local edge rewiring~\citep{barbero2023locality}. Although these methods can marginally improve model performance, they fail to provide a generalized and scalable approach that can effectively process large graphs across various graph topologies. Specifically, attention-based approaches are constrained by their quadratic time complexity and lack of topological awareness, which leads to computational inefficiency. Rewiring algorithms, on the other hand, rely on expensive connectivity measures that are often impractical for large and dense graphs. These methods are also embedding-agnostic and, consequently, fail to directly address over-smoothing in certain graph structures. For these reasons, developing a scalable and general method for learning long-range interactions (LRIs) would significantly expand the applicability of GNNs in semi-supervised node classification tasks. Such a method would be a key contribution to the field.

In this work, we formalize the concept of \emph{graph information flow} and use it to define a novel connectivity measure that analyzes the velocity and acceleration of node embedding updates during message passing, offering insights into graph topology and homophily. We provide both theoretical and empirical evidence to support this claim. Next, we propose a new graph rewiring paradigm, \emph{information flow control}, which mitigates both over-smoothing and over-squashing with minimal additional time and memory complexity. Furthermore, we introduce a novel GNN architecture, \emph{DeltaGNN}, which implements information flow control to capture both long-range and short-range interactions. We benchmark our model across a wide range of real-world datasets, including graphs with varying sizes, topologies, densities, and homophilic ratios. Finally, we compare the results of our approach with popular state-of-the-art methods to evaluate its generalizability and scalability.

\textbf{Our contributions.} (1) We introduce a novel connectivity measure, the \emph{information flow score}, which identifies graph bottlenecks and heterophilic node interactions, supported by both theoretical and empirical evidence of its efficacy. (2) We propose an \emph{information flow control} mechanism that leverages this measure to perform sequential edge-filtering with linear computational overhead, which can be flexibly integrated into any GNN architecture. (3) We design a scalable and generalizable framework, \emph{DeltaGNN}, for detecting both short-range and long-range interactions, demonstrating the effectiveness of our theoretical findings.

\section{Preliminaries}
A graph is usually denoted as $\mathcal{G} = (\mathcal{V}, \mathcal{E})$, where $\mathcal{V}$ represents the node set and $\mathcal{E}$ the edge set. The edge set can also be represented as an adjacency matrix, defined as $\boldsymbol{A} \in \{0,1\}^{|\mathcal{V}| \times |\mathcal{V}|}$, where $\boldsymbol{A}_{ij} \neq 0$ if and only if the edge $(i,j)$ exists. The node feature matrix is defined as $\boldsymbol{X} \in \mathbb{R}^{|\mathcal{V}| \times d_{\mathcal{V}}}$ for some feature dimensions $d_{\mathcal{V}}$. We denote $\boldsymbol{X}^{t}_u$ as the features of the node $u$ at layer $t$, with the convention $\boldsymbol{X}^{0} = \boldsymbol{X}$. Furthermore, each node \(u\) is associated with a class \(y_u \in \mathcal{C}\), where \(\mathcal{C}\) denotes the set of classes. The goal, in node-classification tasks, is to learn \(\Phi:\mathcal{V} \rightarrow \mathcal{C}\), the unique function mapping each node to a specific class. 

\subsection{Graph Neural Networks}
Each layer of the GNN applies a transformation function and message-passing aggregation function to each feature vector $\boldsymbol{X}_u$ and its neighborhood $\mathcal{N}(u)$. The general formulation of this operation can be expressed as follows:

\begin{equation}
    \boldsymbol{X}^{t+1}_u = \phi \left(\bigoplus_{v \in \Tilde{\mathcal{N}}(u)} \psi(\boldsymbol{X}^{t}_v) \right) \quad \text{for } 0 \leq t \leq T
\label{gnn}
\end{equation}

where $\phi$ and $\psi$ are differentiable functions, $\oplus$ is an aggregation function (typically sum, mean, or max), $\Tilde{\mathcal{N}}(u) = \mathcal{N}(u) \cup u $ the extended neighbourhood of \(u\), and \(T\) is the number of layers of the model.

\subsection{Over-smoothing and Homophily}
Over-smoothing~\citep{rusch2023survey} can be formalized as follows.

\begin{definition}[Over-smoothing]
Over-smoothing refers to the phenomenon where the representations of nodes become indistinguishable as the number of layers \(T\) increases, weakening the expressiveness of deep GNNs and limiting their applicability. In node classification tasks, over-smoothing can hinder the ability of GNNs to distinguish between different classes, as their feature vectors converge to a fixed value with increasing layers:
\[\sum_{(u,v) \in \mathcal{E} : \Phi(u) \neq \Phi(v)} ||\boldsymbol{X}^{T}_u-\boldsymbol{X}^{T}_v|| \rightarrow 0 \text{ as } T \rightarrow \infty\]
\label{over-smoothing}
\end{definition}

Over-smoothing is accelerated by the presence of edges connecting distinct node classes~\citep{chen2020measuring}, these are called heterophilic edges. The tendency of a graph to lack heterophilic connections is termed homophily. This property can be quantified by calculating the graph homophilic ratio \(\mathcal{H} \in [0,1]\), which is the average of the homophilic ratios of its nodes.

\begin{definition}[Homophilic Ratio of a Node] The homophilic ratio \(\mathcal{H}_u \in [0,1]\) of a node \(u\) is the proportion of neighbours \(u' \in \mathcal{N}(u)\) such that \(u\) and \(u'\) belong to the same class. The homophilic ratio is defined as:
\[
\mathcal{H}_u = \frac{|\{v \in \mathcal{N}(u) \mid \Phi(v) = \Phi(u)\}|}{|\mathcal{N}(u)|}
\]
\label{def:homo_node}
\end{definition}

Over-smoothing can be mitigated by increasing graph homophily, for instance, by removing heterophilic edges.

\begin{figure}
    \centering
    \includegraphics[width=0.4\textwidth]{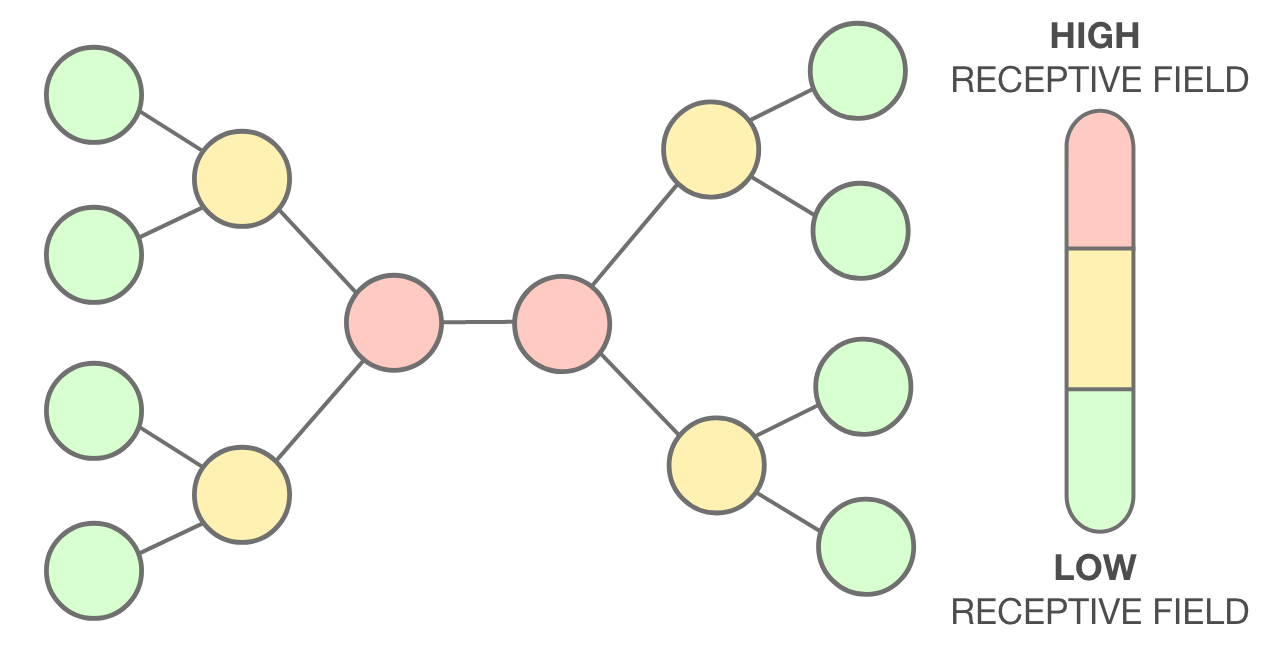}
    \caption{Illustration of a graph with a bottleneck.}
    \label{fig:over_squashing}
\end{figure}

\subsection{Over-squashing and Connectivity}
Over-squashing~\citep{alon2020bottleneck}, on the other hand, is the inhibition of the message-passing capabilities of the graph caused by graph bottlenecks. 

\begin{definition}[Over-squashing] Over-squashing refers to the exponential growth of a node's receptive field, leading to the collapse of substantial information into a fixed-sized feature vector due to graph bottlenecks (see Figure~\ref{fig:over_squashing}).
\label{over-squashing}
\end{definition}

To alleviate over-squashing, it is necessary to improve the connectivity of the graph. This can be achieved by removing or relaxing graph bottlenecks that hinder connectivity~\citep{di2023over}. Typically, these bottlenecks can be identified by computing a connectivity measure, such as the Ollivier-Ricci curvature~\citep{ollivier2009ricci,sia2019ollivier} or a centrality measure. The term \emph{connectivity measure} refers to any topological or geometrical quantity that captures how easily different pairs of nodes can communicate through message passing.

\subsection{Welford's Method for Calculating Average and Variance}

Welford's method \citep{welford1962note} provides a numerically stable and incremental approach to compute mean and variance. This method is particularly advantageous in scenarios where data points are received sequentially, and real-time updates to the statistics are required. Given a sequence of numbers \( x_1, x_2, \ldots, x_n \), Welford's method calculates the mean \( \mu_n \) and variance \( \sigma_n^2 \). The mean after \( n \) elements can be updated incrementally by:

\begin{equation}
\mu_n = \mu_{n-1} + \frac{x_n - \mu_{n-1}}{n}
\label{welford_mean}
\end{equation}

where \( \mu_{n} \) is the mean after the first \( (n) \) elements. The variance after \( n \) elements can be updated using the following formula:

\begin{equation}
\sigma_n^2 = \sigma_{n-1}^2 + \frac{(x_n - \mu_{n-1}) \times (x_n - \mu_n) - \sigma_{n-1}^2}{n}
\label{welford_var}
\end{equation}

where \( \sigma_n^2 \) is the variance after the first \(n\) elements.

\section{Related Works}
\textbf{Transformer-based self-attention.}
A common approach to overcome over-smoothing and over-squashing and capture long-range node interactions is integrating a transformer-based components into the GNN~\citep{yun2019graph,dwivedi2020generalization,fei2023gnn,wu2021representing}. Transformers can aggregate information globally without being limited by the local neighborhood aggregation paradigm, making them a very effective solution. However, they fail to propose a scalable solution which can process large-scale graphs, which is arguably the scenario where long-range interaction detection is of the utmost importance. Graph self-attention has a time complexity of \( O(|\mathcal{V}|^2) \), where \(|\mathcal{V}|\) refers to the number of nodes in the graph, this complexity is unsuitable for large graphs. Moreover, classical transformers are inefficient at processing LRIs in graphs because they are inherently \emph{topology-agnostic} and consequently process all \( |\mathcal{V}|^2 \) possible interactions.

\textbf{Topological graph rewiring.} Many rewiring algorithms exploit graph curvature~\citep{nguyen2023revisiting} or other connectivity measures~\citep{barbero2023locality, black2023understanding, arnaiz2022diffwire, karhadkar2022fosr} to identify parts of the graph suffering from over-squashing and over-smoothing, and alleviate these issues by adding or removing edges. Such approaches have two main limitations: first, they rely on expensive connectivity measures (e.g., Ollivier-Ricci curvature, betweenness centrality); and second, they are \emph{embedding-agnostic}, as they primarily base rewiring decisions on the graph's connectivity. However, over-smoothing is not a topological phenomenon, and as observed by recent works~\citep{yan2022two}, its effects can be mitigated by acting on the graph's heterophily. Therefore, improving connectivity alone is insufficient to prevent over-smoothing (see Appendix A).

\section{Information Flow}
\subsection{Graph Information Flow}
We define \emph{Graph Information Flow} (GIF) as the exchange of information between nodes during the message-passing process. This can be quantified as the rate at which node embeddings are aggregated across each layer of the GNN. To observe how GIF changes over time within our model, we introduce two sequences that are useful for quantifying this variation. Let \( M \) be a smooth manifold equipped with a distance function \( d: M \times M \rightarrow \mathbb{R} \) that defines the geometry of the space. Additionally, assume the features embeddings \(\mathbf{X}\) of the input graph lie on \(M\).

\begin{definition}[First Delta Embeddings] Let \(\mathbf{M}_u^t = \psi(\mathbf{X}_u^{t-1})\) be the transformed feature vector of node \(u\) at layer \(t\). We define the first delta embeddings at layer \(t\) as the distance between the aggregated and transformed feature vectors.

\[
\Delta^t_u =  d(\bigoplus_{v \in \mathcal{N}(u)} \mathbf{M}_v^t,\mathbf{M}_u^t) \quad \text{for } 1 \leq t \leq T
\]

Where \(T \in \mathbb{N}\) defines the number of layers of the architecture. This quantity can be interpreted as the velocity at which the node embeddings are aggregated at layer \(t\).
\label{def:first_deltas}
\end{definition}

\begin{definition}[Second Delta Embeddings]
Similarly, let \(\Delta^{t}_u\) be the first delta embedding of a node \(u \in \mathcal{V}\) at time \(t \in [1,T]\), where \(T \in \mathbb{N}\) defines the number of layers of the architecture. Then, we define the second delta embeddings as the first differences of the first delta embeddings over time.

\[
(\Delta^2)^t_u = d(\Delta^{t}_u,\Delta^{t-1}_u) \quad \text{for }  2 \leq t \leq T \quad \text{and } \quad (\Delta^2)^1_u = 0
\]

This can be interpreted as the rate of change in the rate at which node embeddings are aggregated within the model, analogous to acceleration in physical systems.
\label{def:second_deltas}
\end{definition}

We now provide theoretical evidence showing how the sequences \(\Delta^{t}_u\) and \((\Delta^2)^t_u\) can offer insights into the graph's homophily and topology, respectively.

\setcounter{lemma}{0}
\begin{lemma}
Let \(\Delta^{t}_u\) be the first delta embeddings of a node \(u\) and \(\overline{\Delta_u}\) be the average over time of the sequence. Assume \(\Delta^{t}_u\) converges to zero, \(M\) is compact and that there exists a unique function \(\phi : M \rightarrow C \) which correctly assign all possible feature vectors to their associated labels. Then, for any homophilic ratio \(\mathcal{H} \in [0,1]\), there exists a positive lower-bound \(\rho \in (0,+\infty)\) such that any node \(u \in \mathcal{V}\) with feature vector \(\mathbf{X}_u \in M\) and \(\overline{\Delta_u} > \rho\) will have \( \mathcal{H}_u < \mathcal{H}\).
\label{lemma_1}
\end{lemma}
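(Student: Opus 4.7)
The plan is to prove the contrapositive: for any $\mathcal{H} \in [0,1]$, any node $u$ with $\mathcal{H}_u \geq \mathcal{H}$ must satisfy $\overline{\Delta_u} \leq \rho$ for a suitably chosen constant $\rho = \rho(\mathcal{H})$ depending on $\mathcal{H}$ and on the geometry of $M$. Taking $\rho$ to be this bound then yields the lemma, so the task reduces to exhibiting a finite, positive, monotone-in-$\mathcal{H}$ upper bound on $\overline{\Delta_u}$ for nodes that meet the homophily constraint.

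First I would exploit the uniqueness of $\phi : M \to \mathcal{C}$ together with compactness of $M$: because $\phi$ assigns each feature vector to exactly one class, the preimages $M_c = \phi^{-1}(c)$ partition $M$, and compactness guarantees that each $M_c$ has a finite diameter $D_c = \sup_{x,y \in M_c} d(x,y)$. Writing $D_{\mathrm{in}} = \max_c D_c$ and $D = \mathrm{diam}(M)$, I fix a node $u$ with $\mathcal{H}_u = h \geq \mathcal{H}$ and split its neighborhood into same-class neighbors $\mathcal{N}_s(u)$ of fractional size $h$ and different-class neighbors $\mathcal{N}_d(u)$ of fractional size $1-h$. As long as the transformed features of same-class neighbors remain in the shared class region, they lie within $D_{\mathrm{in}}$ of $\mathbf{M}_u^t$, whereas different-class contributions can be at most $D$ apart.

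Using the triangle inequality and standard properties of $\bigoplus$ (which behaves as a weighted centroid when the aggregation is sum or mean and respects individual distances for max), I would then bound $\Delta^t_u = d(\bigoplus_v \mathbf{M}_v^t,\mathbf{M}_u^t)$ above by a convex combination of the order $h\,D_{\mathrm{in}} + (1-h)\,D$ during the transient regime. The assumption $\Delta^t_u \to 0$ guarantees that the tail of the sequence contributes negligibly to the Cesàro average $\overline{\Delta_u}$, so the same bound (up to a vanishing correction) controls $\overline{\Delta_u}$. Choosing $\rho(\mathcal{H}) = \mathcal{H}\,D_{\mathrm{in}} + (1-\mathcal{H})\,D$ yields a threshold in $(0,+\infty)$ that strictly decreases in $\mathcal{H}$, matching the intuition that higher homophily forces smaller average deltas.

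The main obstacle will be justifying that same-class neighbors keep their transformed features clustered after the nonlinear map $\psi$ at deep layers, since over-smoothing could temporarily drag embeddings across class boundaries before the sequence stabilizes. I plan to handle this by combining continuity of $\psi$ with compactness of $M$ to propagate the class-wise clustering, and by leveraging $\Delta^t_u \to 0$ to confine any problematic indices to a tail whose contribution to $\overline{\Delta_u}$ vanishes. A backup route, should this prove too fragile, is to treat $\Delta^t_u$ as a discrete derivative of the feature trajectory, so that $\sum_t \Delta^t_u$ telescopes into a total-variation quantity bounded by $\mathrm{diam}(M)$; one can then show that the homophilic fraction $h$ rescales this total variation, producing an equivalent bound on $\overline{\Delta_u}$ without detailed reasoning about the geometry of $\psi$.
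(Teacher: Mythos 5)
Your overall skeleton matches the paper's: argue the contrapositive, use compactness of \(M\) to produce a finite upper bound on \(\Delta^t_u\) for every node with \(\mathcal{H}_u \geq \mathcal{H}\), and then pass from the per-layer bound to the average \(\overline{\Delta_u}\). Where you diverge is in \emph{how} the per-layer bound is obtained. The paper never decomposes the neighborhood geometrically; instead it considers the family \(S\) of all feature assignments \(s:\mathcal{N}(u)\to M\) whose label proportions realize the given homophilic ratio, notes that the actual neighborhood is one such assignment, and invokes continuity of \(d\) (via Hopf--Rinow on the compact manifold) plus the Extreme Value Theorem to define \(U(\mathcal{H}_u)_u = \sup_{s\in S}\tilde{\Delta}^t_u(s)\); the threshold is then any \(\rho > \sup_{h\in[\mathcal{H},1]}U(h)_u\). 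This abstract route buys exactly the thing you flag as your main obstacle: because the sup ranges over \emph{all} label-consistent assignments, no claim is needed about \(\psi\) keeping same-class features clustered, nor about \(\bigoplus\) behaving like a centroid. Your explicit bound \(h\,D_{\mathrm{in}} + (1-h)\,D\) is more informative when it holds (it makes the monotonicity in \(\mathcal{H}\) visible, which the paper only gets implicitly through \(\sup_{h\in[\mathcal{H},1]}\)), but it carries two obligations the paper avoids: (i) the convex-combination inequality for \(d(\bigoplus_v \mathbf{M}^t_v, \mathbf{M}^t_u)\) fails for sum aggregation and is not automatic on a general manifold, and (ii) \(\mathbf{M}^t_v = \psi(\mathbf{X}^{t-1}_v)\) need not lie in the same class region \(\phi^{-1}(c)\) as \(\mathbf{M}^t_u\), since nothing forces \(\psi\) to respect the partition induced by \(\phi\) --- continuity of \(\psi\) alone will not propagate class-wise clustering.

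One concrete error in your fallback plan: \(\Delta^t_u\) is the distance between the aggregated neighborhood message and the node's own transformed feature \emph{at the same layer}, not the distance between consecutive states of a trajectory, so \(\sum_t \Delta^t_u\) does not telescope into a total-variation quantity. If the centroid-type bound proves hard to justify, the cleaner repair is to drop the explicit diameters and retreat to the paper's device: a supremum over label-consistent assignments, which is finite by compactness regardless of what \(\psi\) and \(\bigoplus\) do, at the cost of an explicit formula for \(\rho\).
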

\begin{proof} To prove the lemma, we first show that there exists \(\rho\), which is a valid positive lower-bound  for every \(\Delta_u^t\) for some node \(u\). Then, we deduce that \(\rho\) is also a valid lower bound for \(\overline{\Delta_u}\). 

For any node \(u \in \mathcal{V}\) within the graph, with feature vector \( \mathbf{X}_u^t \in M\), homophilic ratio \(\mathcal{H}_u\), and neighbourhood \(\mathcal{N}(u) = \{n_1, n_2, ..., n_k\}\) of degree \(k\), we define \(S\) as the family of all feature assignments \(s : \mathcal{N}(u) \rightarrow M\) mapping each neighboring node of \(u\) to a feature vector in \(M\) such that the following constraint is respected: 

\[\mathcal{H}_u = \frac{|\{m \in s(v) \textbf{ : } v \in \mathcal{N}(u)) \mid \Phi(m) = \Phi(\mathbf{X}_u^t)\}|}{k}\]

Here, the constraint ensures that the feature assignment respects the given homophily ratio \(\mathcal{H}_u\). Thanks to the existence of \(\Phi\) we know that the cardinality of \(S\) is at least one, since we can always define \( S \ni \Tilde{s}(n_i) = \mathbf{X}_{n_i}^t \textbf{ } \forall \textbf{ } i \in [1,k]\), where \(\mathbf{X}_{n_i}^t\) are columns of the feature matrix \(\mathbf{X}^t\) associated to the neighbouring nodes \(n_i\). In this case, the constraint reduces to the definition of homophilic ratio of \(u\). Now, we proceed by defining \(\Delta_u^t\) with respect to a feature assignment \(s \in S\) as \(\Tilde{\Delta_u^t}(s) := d(\bigoplus_{v \in \mathcal{N}(u)} s(v),\mathbf{M}_u^t)\) for which \(\Tilde{\Delta_u^t}(\Tilde{s}) = \Delta_u^t\) holds. Since our manifold \(M\) is compact, the \emph{Hopf–Rinow Theorem}~\citep{hopf1931ueber} ensures that the distance metric \(d\) is continuous, which also implies that our \(\Tilde{\Delta_u^t}(s)\) is continuously defined for any feature assignment. By the \emph{Extreme Value Theorem}~\citep{RUSNOCK2005303} and the fact that the set \(S\) is non-trivial, \(\Tilde{\Delta_u^t}(s)\) must attain its maximum real value for some feature assignment \(s \in S\). As a result, for any node \(u\) with a certain homophilic ratio \(\mathcal{H}_u\) we can define the real positive constant \(U(\mathcal{H}_u)_u := \sup_{s \in S} \Tilde{\Delta_u^t}(s) \) as the maximum value that the first delta embeddings at time \(t\) can take for any possible feature assignment. 

Now, for any homophilic ratio \(\mathcal{H} \in [0,1]\) we can choose a \(\rho > \sup_{h \in [\mathcal{H},1]} U(h_u)_u\) such that any node \(u\) with \(\Delta^t_u>\rho\) will have \(\mathcal{H}_u<\mathcal{H}\). This last implication can be verified assuming, for the sake of contradiction, that the node \(u\) may have \(\Delta^t_u>\rho\) and \(\mathcal{H}_u\geq\mathcal{H}\), and observing that this leads to the following contradiction: 
\[ \rho < \Tilde{\Delta_u^t}(\Tilde{s}) \leq \sup_{s \in S} \Tilde{\Delta_u^t}(s) = U(\mathcal{H}_u)_u \leq \sup_{h \in [\mathcal{H},1]} U(h_u)_u < \rho \]
Since this relation holds true for any \(t\), we can deduce:

\[ \Delta^t_u>\rho \rightarrow \sum_{i \in [1,T]}\Delta^i_u > T\rho \rightarrow  \overline{\Delta_u} > \rho \]

where the penultimate inequality holds if and only if the series is convergent, completing the proof.
\end{proof}

In practice, this generalization over the mean helps reduce noise that may affect the first layers. The value of \(\rho\) depends on the maximum distance between the subspaces of \(M\) associated with the node classes. In some domains, lower thresholds can be obtained, allowing for better distinction between nodes with different homophilic ratios. Finally, since the node classes are hidden from the model, we cannot directly compute the threshold \(\rho\). However, it is sufficient to know that, since \(\rho\) exists, the nodes with the highest values of \(\overline{\Delta}\) will likely have the lowest homophilic rates.

\setcounter{lemma}{1}
\begin{lemma}
Let \(c : \mathcal{V} \rightarrow \mathbb{R}\) be a node connectivity measure, and let \(\mathbb{V}_t[\Delta^2_u]\) denote the variance over time of the second delta embeddings of a node \(u\). Assume that there exists an upperbound \(\mu\) such that for any node \(u \in \mathcal{V}\), \(c(u) < \mu\) if and only if the node \(u\) is adjacent to an edge bottleneck. Then, any node \(u \in \mathcal{V}\) for which \(c(u) < \mu\) will exhibit a low value of the variance \(\mathbb{V}_t[\Delta^2_u]\).
\label{lemma_2}
\end{lemma}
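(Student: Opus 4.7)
The plan is to link the bottleneck condition $c(u) < \mu$ to a slowly varying first-delta sequence $\{\Delta^t_u\}_{t=1}^T$, from which it follows that the second-delta sequence $\{(\Delta^2)^t_u\}$ concentrates near zero and hence has small variance. So I would attack the lemma in two stages: first a stability estimate on the $\Delta^t_u$, then a variance bound on the $(\Delta^2)^t_u$.

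For the first stage I would invoke the over-squashing phenomenon (Definition~\ref{over-squashing}). By hypothesis $u$ is adjacent to an edge bottleneck, so the message-passing bandwidth into $u$ is constrained: at each layer the aggregated vector $\bigoplus_{v \in \mathcal{N}(u)} \mathbf{M}_v^t$ must compress an exponentially growing receptive field into a fixed-size embedding sitting on $M$. I would argue that this forces the successive aggregates at $u$ to lie in (nearly) the same low-dimensional region of $M$, so that the velocities $\Delta^t_u = d\bigl(\bigoplus_{v \in \mathcal{N}(u)} \mathbf{M}_v^t,\mathbf{M}_u^t\bigr)$ evolve only slowly in $t$. Concretely, I would seek an estimate of the form $|\Delta^{t}_u - \Delta^{t-1}_u| \le \varepsilon_c$, where $\varepsilon_c$ is controlled by the deficit $\mu - c(u)$ and shrinks to $0$ as the bottleneck tightens.

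For the second stage, this stability bound translates directly into a bound on the second deltas: by the triangle inequality on $d$, $(\Delta^2)^t_u = d(\Delta^t_u,\Delta^{t-1}_u) \le \varepsilon_c$ for all $t \ge 2$, and the boundary convention gives $(\Delta^2)^1_u = 0$. Hence every element of the sequence $\{(\Delta^2)^t_u\}_{t=1}^T$ lies in $[0,\varepsilon_c]$. I would then close the argument using the elementary fact that a sequence confined to an interval of length $\varepsilon_c$ has variance at most $\varepsilon_c^2/4$, yielding $\mathbb{V}_t[(\Delta^2)_u] = O(\varepsilon_c^2)$, which is the ``low variance'' claimed.

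The main obstacle will be the first stage: the lemma allows an abstract connectivity measure $c$, so the only information we are given about bottleneck adjacency is qualitative, namely $c(u) < \mu$. Turning this qualitative condition into a concrete constant $\varepsilon_c$ bounding $|\Delta^{t}_u - \Delta^{t-1}_u|$ requires an auxiliary ingredient that relates $c$ to the spectral or diffusion properties of the local aggregation operator at $u$. A natural route is to borrow Jacobian-norm bounds at bottleneck edges from the over-squashing literature \citep{di2023over} and argue that the layer-to-layer aggregation at $u$ acts as a near-contraction in a neighbourhood on $M$; Lipschitz continuity of $\psi$, $\phi$, and $d$ (guaranteed on a compact patch of $M$) then propagates this contraction to the velocities $\Delta^t_u$ and delivers the desired low-variance conclusion.
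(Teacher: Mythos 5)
Your proposal is correct in spirit and shares the paper's central intuition---that a bottleneck constrains communication into $u$ and therefore forces the first deltas $\Delta^t_u$ to evolve slowly, so that the second deltas $(\Delta^2)^t_u$ stay small and their variance is low---but it formalizes this differently. The paper argues \emph{comparatively}: citing \citep{nguyen2023revisiting}, it posits that for $c(u)<\mu<c(v)$ (with equal initial distances to the respective neighbourhoods) one has $\Delta^t_v \in o(\Delta^t_u)$, so the well-connected node's second deltas start high and decay while the bottlenecked node's second deltas change slowly over time, whence $\mathbb{V}_t[\Delta^2_u]<\mathbb{V}_t[\Delta^2_v]$; ``low variance'' is thus read as \emph{low relative to non-bottlenecked nodes}. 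You instead aim for an \emph{absolute} bound: a layer-wise stability estimate $|\Delta^t_u-\Delta^{t-1}_u|\le\varepsilon_c$ followed by the observation that a sequence confined to $[0,\varepsilon_c]$ has variance at most $\varepsilon_c^2/4$. Your second stage is cleaner and more explicit than anything in the paper's proof, but your first stage demands a quantitative link between the abstract measure $c$ and a contraction constant $\varepsilon_c$ that the lemma's hypotheses (a purely qualitative threshold $c(u)<\mu$) do not supply---you correctly flag this as the main obstacle and propose importing Jacobian-norm bounds from \citep{di2023over}. The paper's corresponding step has the same gap in the same place (it rests on an ``intuitive observation'' plus a citation), so neither argument is fully rigorous; the trade-off is that your route, if the contraction estimate were established, would yield a stronger quantitative conclusion, while the paper's comparative reading is weaker but aligns with how the score is actually used downstream, namely to \emph{rank} nodes by their variance rather than to certify a numerical bound.
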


\begin{proof}
Recently, \citep{nguyen2023revisiting} demonstrated that nodes adjacent to bottlenecked edges are less affected by over-smoothing, while nodes located in dense areas of the graph experience faster convergence of their vector embeddings. To prove the lemma, we extend these results by observing that \(\mathbb{V}_t[\Delta^2_u]\) can be used to classify a node \(u\) into one of these two categories. 

This observation stems from the definition of over-squashing. Building on the results of~\citep{nguyen2023revisiting}, we can formalize the correlation between over-squashing and connectivity as follows: for a pair of nodes \(u\) and \(v\) with feature vectors 
\(\mathbf{X}_u^t\) and \(\mathbf{X}_v^t\), respectively, where 
\(c_u < \mu < c_v\) (with \(c_u\) and \(c_v\) representing their respective connectivity), and assuming both nodes are equidistant from their neighborhood (implying the same amount of information is aggregated at time 
\(t = 1\)), we have \( \Delta_v^t \in o(\Delta_u^t)\) (in the little-o notation). In other words, node \(v\) experiences faster convergence compared to node \(u\), as nodes near bottlenecks tend to have constrained communication paths in the graph and are consequently less likely to experience significant fluctuations in their embedding values. Now, we can distinguish the two cases: \(v\) will converge faster, leading to high values of \((\Delta^2_v)^t\)  for small values of \(t\) and low values later in time. On the other hand, for node \(u\), the values of \((\Delta^2_u)^t\) will vary more slowly over time due to its slower convergence. This intuitive observation implies that \(\mathbb{V}_t[\Delta_v^2]\) must necessarily be greater than \(\mathbb{V}_t[\Delta_u^2]\).

This concludes the proof, as we have shown that a node with a low connectivity measure (or, equivalently, a bottlenecked node) exhibits a relatively low variance of the second delta embeddings compared to other nodes in the graph.

\end{proof}

We would like to discuss one more point: \emph{What happens when we compare nodes belonging to very different neighborhoods?} When our assumption of the same distance \emph{w.r.t.} the neighborhoods is not respected, additional noise is introduced into the process, and the convergence of the sequence \(\Delta_v^u\) will also depend on other factors, such as node homophily or feature separability. In these cases, it is evident that our approach will not deterministically find the optimal solution and its accuracy will be negatively affected by the amount of noise introduced. To address this, we implement design choices aimed at reducing the impact of noise and perform numerical experiments to assess its effectiveness (see \ref{experiment}).

\subsection{Information Flow Score}
While over-squashing is a topological phenomenon caused by graph bottlenecks, over-smoothing is worsened due to the presence of heterophilic node interactions. Motivated by our ultimate goal of mitigating these phenomena, and building on the results of Lemmas \ref{lemma_1} and \ref{lemma_2}, we define a node connectivity measure, called the \emph{information flow score (IFS)}, which is minimized for nodes near bottlenecks with a low homophilic ratio (see Equation \ref{eq:score}). Consequently, nodes with low values of this measure are likely to correspond to regions where over-smoothing and over-squashing may occur.

\begin{equation}
\textbf{S}_u = \frac{m*\mathbb{V}_t[\Delta^2_u] + 1}{l*\overline{\Delta_u} + 1}
\label{eq:score}
\end{equation}

Where (\(l\) and \(m\)) are two multipliers that adjust the weight of the mean and variance in the score, respectively. We now explain the rationale behind the definition of the \emph{information flow score}. From Lemma \ref{lemma_1}, we know that a high value of \(\overline{\Delta}\) is likely to be associated with a low homophilic ratio, and from Lemma \ref{lemma_2}, a low \(\mathbb{V}_t[\Delta^2]\) indicates proximity to an edge bottleneck. Consequently, the score is defined as a fraction involving these two terms. Since, in some applications, we may prioritize detecting bottlenecks over heterophilic edges (or vice versa), we introduce the two multipliers \(l\) and \(m\). In our case, we aim to detect both with equal priority, so we set \(m = 1\) and \(l = 1\). To ensure the score remains well-defined, regardless of the delta values, and to prevent it from exploding when the deltas are close to zero, we add 1 to both the numerator and denominator. An additional advantage of adding 1 is that isolated nodes will receive a score of one, which is relatively low. As a result, training the model to rewire the graph while maximizing the overall score will encourage the model to remove edges while still preserving sparsity. To reduce the impact of noise affecting the initial delta values, both the variance and mean are computed using an exponential moving average, which assigns greater weight to more recent data points.

\begin{figure*}
\begin{center}
\includegraphics[width = 2\columnwidth]{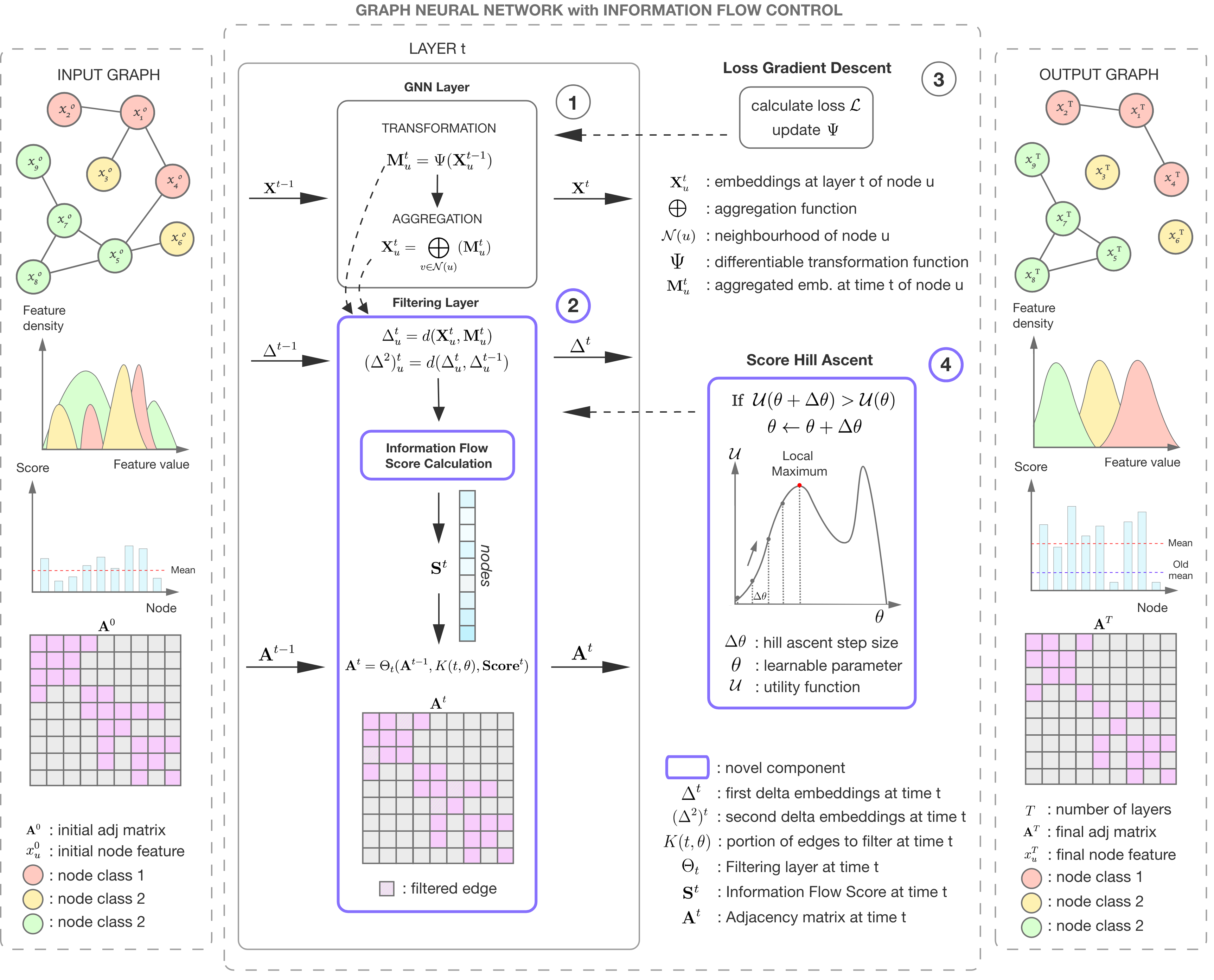}
\end{center}
\caption{\emph{Overview of a GNN model with information flow control}. The figure illustrates how the information flow control mechanism integrates with a standard GNN to filter the graph and increase the mean node score. Simultaneously, the GNN learns to disentangle the node features, ensuring that the output embeddings can be easily classified using a readout layer. The novel components are highlighted in violet. Notably, the mean node score of the processed graph is significantly higher due to the effective edge filtering performed within the GNN layers.}
\label{fig:control}
\end{figure*}

\subsection{Information Flow Control}
Now, we present how to integrate the IFS into a generic GNN to prevent over-smoothing and over-squashing. While common connectivity measures are typically calculated before transforming the graph, our score is computed during the graph transformation learning. Therefore, standard approaches such as graph-rewiring algorithms are not well-suited for handling our measure. To address this limitation, we designed a novel method called \emph{information flow control (IFC)} (see Figure \ref{fig:control}). We first introduce some notations and then propose the novel components of the IFC. We define a topological edge-filtering operation as a function \(\Theta( \mathcal{G},K(\theta),c)\) returning a filtered graph \(\mathcal{G}'\), where \(K(\theta) \in [0,1]\) is a \(\theta\)-parameterized function defining the percentage of edges which has to be removed and \(c \in \mathbb{R}^{|\mathcal{V}|}\) are the nodes connectivity values used to choose which edges to remove. The IFC is composed of a sequence of \(T\) flow-aware edge-filtering layers \(\{\Theta_t\}_t\) that iteratively filter the graph \(\mathcal{G}\):

\[ \mathcal{G} = \mathcal{G}_0 
\xrightarrow[]{\Theta_1} \mathcal{G}_1 \xrightarrow[]{\Theta_2} \mathcal{G}_2
\ldots
\xrightarrow[]{\Theta_T} \mathcal{G}_T
\]

Each filtering operation is defined as \(\Theta_t(\mathcal{G}_{t-1},K(t,\theta),\mathbf{S}^t)\), where \(\mathbf{S}^t \in \mathbb{R}^{|\mathcal{V}|}\) is the IFS of the graph nodes calculated on the deltas up to layer \(t\). These filtering layers are interwoven with standard GNN layers \(\{\Omega_t\}_t\), which are responsible to aggregate and transform the node representations. 

\[ (\mathbf{A}^0,\mathbf{X}^0) \xrightarrow[]{\Omega_1} (\mathbf{A}^0,\mathbf{X}^1) 
\xrightarrow[]{\Theta_1} (\mathbf{A}^1,\mathbf{X}^1)
\ldots
\xrightarrow[]{\Theta_T} (\mathbf{A}^T,\mathbf{X}^T)
\]

Where \(\mathbf{A}^t\) denotes the adjacency matrix of \(\mathcal{G}\) at layer \(t\). The IFC also includes a component that maximizes the mean node score by optimizing the parameter \(\theta\) through hill ascent over a utility function \(\mathcal{U}\). We set an initial value of zero for \(\theta\) and perform a local hill ascent to find a local maximum. This approach is advantageous because the IFC is encouraged to preserve sparsity, increasing the number of edges removed only if it leads to an increase in the utility function \(\mathcal{U}\) (e.g., the mean of the scores at the last layer). Follows a pseudo-code implementing the forward-pass of a GNN with our IFC module.

\begin{algorithm}
\caption{GNN with IFC: forward-pass}
\label{alg:deltagnn}
\begin{algorithmic}[1]
\State \textbf{Input:} Node features $\mathbf{X}^0$, adjacency matrix $\mathbf{A}^0$, Previous utility value \(\mathcal{U}_{old}\)
\State \textbf{Output:} Processed features $\mathbf{X}^T$, updated adjacency matrix $\mathbf{A}^T$ \\
Initialize variable $\Delta^0 = 0$, score array \(\mathbf{S}\), \(\theta = 0\)
\For{each layer $t$}
\State Compute transformed features \(\mathbf{M}^t\)\Comment{\eqref{gnn}}
\State Compute \(\mathbf{X}^t\) by aggregating \(\mathbf{M}^t\)\Comment{\eqref{gnn}}

\State Compute $\Delta^t$\Comment{\eqref{def:first_deltas}}
\If{not first layer}
    \State Compute $(\Delta^2)^t$\Comment{\eqref{def:second_deltas}}
\Else
    \State Set $(\Delta^2)^t = 0 $
\EndIf
\State Update $\overline{\Delta^t}$ applying Welford's Method on $\overline{\Delta^{t-1}}$ \Comment{\eqref{welford_mean}}
\State Update $\mathbb{V}[(\Delta^2)^t]$ applying Welford's Method on $\mathbb{V}[(\Delta^2)^{t-1}]$ \Comment{\eqref{welford_var}}
\State Compute scores $\mathbf{S}^t$\Comment{\eqref{eq:score}}
\State Update $\mathbf{A}^t$ removing \(K(t,\theta)\) edges with low IFS from $\mathbf{A}^{t-1}$
\EndFor

\State Compute utility value \(\mathcal{U}_{new}\) based on final scores \(\mathbf{S}^T\) (e.g. mean node score)
\If{\(\mathcal{U}_{new} > \mathcal{U}_{old}\)}
    \State Update parameter \(\theta\) with step size \(\Delta\theta\) \textit{(hill ascent step)}
\EndIf

\State \Return processed features $\mathbf{X}^T$, updated adjacency matrix $\mathbf{A}^T$, new utility value \(\mathcal{U}_{new}\)

\end{algorithmic}
\end{algorithm}

\section{Methodology: DeltaGNN}
In this section, we illustrate DeltaGNN and how it leverages the IFC to capture both short-range and long-range node interactions overcoming the limitations of its predecessor \citep{mancini2024duognn}. First, DeltaGNN processes the node embeddings and graph adjacency matrix through a standard homophilic GNN with IFC. During the sequential filtering, the model partitions the graph into homophilic clusters by removing bottlenecks and heterophilic edges. Simultaneously, the homophilic aggregation learns short-range dependencies. However, this process leads to the loss of long-range interactions, which we recover through an heterophilic graph condensation. The long-range dependencies are then learned via a GNN heterophilic aggregation. Finally, a readout layer processes the results from both GNN aggregations. An overview of the model pipeline is illustrated in Figure~\ref{fig:deltagnn}. 

\begin{figure*}
\begin{center}
\includegraphics[width = 2\columnwidth]{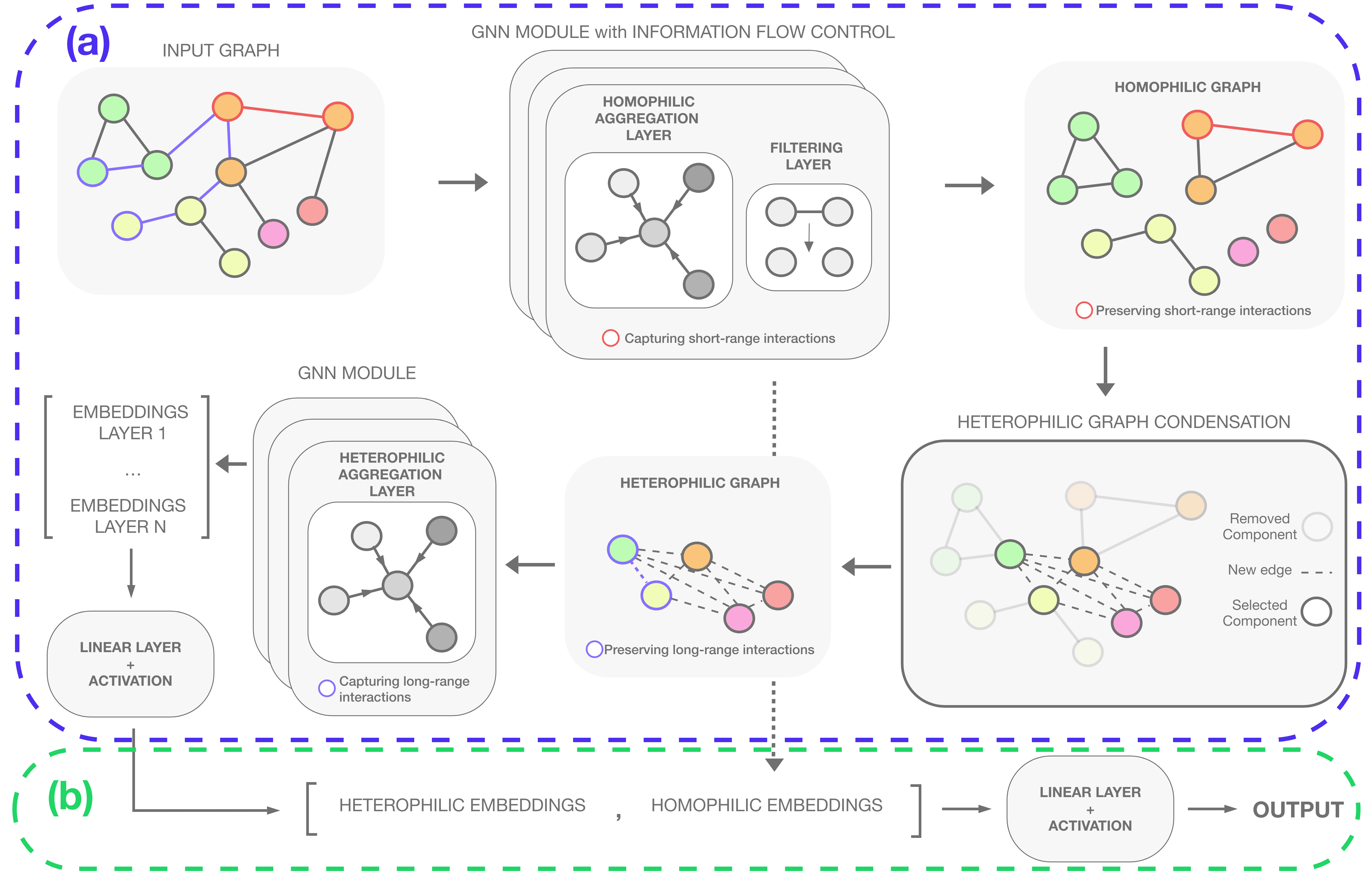}
\end{center}
\caption{The DeltaGNN pipeline consists of: (a) a \emph{sequential transformation} stage that processes both homophilic and heterophilic edge interactions, performing homophily-based interaction-decoupling and dual aggregation to learn short-term and long-term dependencies; and (b) a \emph{prediction} stage that concatenates the results from the first stage.}
\label{fig:deltagnn}
\end{figure*}

\subsection{Sequential transformation stage} The \emph{sequential transformation} stage includes a \emph{homophilic aggregation with IFC}, a \emph{heterophilic graph condensation}, and a \emph{heterophilic aggregation}. This stage is designed to discriminate between homophilic and heterophilic node interactions, producing a strongly homophilic graph and a strongly heterophilic graph, which are processed by independent GNN modules. This concept of homophily-based interaction-decoupling is crucial to prevent over-smoothing by avoiding using a standard GNN aggregation on heterophilic edges.
During the homophilic aggregation (see Equation \ref{gnn}) the model capture short-range spatial interactions, while the IFC removes most heterophilic edges within the graph, breaking it into homophilic connected components (see Figure \ref{fig:deltagnn}). We set \(\mathcal{U} = \overline{\mathbf{S}^T}\) and the step size \(\Delta\theta = \eta\), where \(\eta\) denotes the learning rate of the transformation function of the GNN module. At each GNN layer \(t\), we remove the \(K(t,\theta)\) edges with the lowest scores, which are calculated using the Euclidean distance \(d(\mathbf{x}, \mathbf{y}) = \|\mathbf{x} - \mathbf{y}\|\). This process reduces over-smoothing by increasing the homophily of the graph, and over-squashing, by removing the graph bottlenecks. As a result, the output graph of this first step, which we define \(\mathcal{G}_{\textit{ho}}\), will be highly homophilic and preserve most short-term node interactions. On the other hand, most long-range interaction will be lost during the edge filtering. \\

Next, we proceed with the \emph{heterophilic graph condensation}, which aims at extracting the most relevant heterophilic interactions within the homophilic clusters to re-introduce long-range dependencies. To achieve this, we select the nodes with the highest scores, which are likely to have the most reliable representations, and construct a distinct fully-connected graph with them. The resulting graph, defined as \(\mathcal{G}_{\textit{he}}\), will be highly heterophilic since nodes with different labels are likely to be neighbours due to the clustering during the score-based edge filtering. This graph will preserve the majority of the LRIs while resulting considerably smaller than the original graph. Next, we apply the \emph{heterophilic aggregation} on \(\mathcal{G}_{\textit{he}}\) which is a simple variation of the standard update rule defined in equation \ref{hete}.

\begin{align}
\boldsymbol{X}^{t+1}_u = \phi \left(\bigoplus_{v \in \mathcal{N}(u)} \psi(\boldsymbol{X}^{t}_v), 
\psi^s(\boldsymbol{X}^{t}_u)
\right), \notag \\[0.1cm] \notag \\ 
\boldsymbol{X}^{1}_u = \phi \left(
\psi^s(\boldsymbol{X}^{0}_u)
\right),
\quad \text{for } 0 \leq t \leq L-1
\label{hete}
\end{align}

where $\psi^s$ is a differentiable function used to process node self-connections. Additionally, we do not aggregate the node features in the first layer so that the model can process the original features. Finally, we build a row vector of all layers' node embedding outputs \(\mathbf{X}^i_{he}\) and feed it into a linear layer, similarly to what was done in~\citep{xu2018representation} (see Equation \ref{equation_out}). This ensures that the model learns to distinguish between different node classes at many levels of smoothness.

\subsection{Prediction stage} In this stage, we concatenate and process both modules' outputs, \(\boldsymbol{X}_{\textit{ho}}^{out}\) and \(\boldsymbol{X}_{\textit{he}}^{out}\), through a final linear layer to perform the prediction (see Equation \ref{equation_out}).

\begin{align}
\boldsymbol{X}_{\textit{he}}^{out} = \phi_{\textit{he}}\left(
\boldsymbol{W}_{\textit{he}}^{out}
\begin{bmatrix}
\boldsymbol{X}_{\textit{he}}^{1} \\
\boldsymbol{X}_{\textit{he}}^{2} \\
\vdots \\
\boldsymbol{X}_{\textit{he}}^{T}
\end{bmatrix}\right) \notag \\[0.1cm] \notag \\ 
\boldsymbol{X}^{out} = \phi_{\textit{out}}\left(
\boldsymbol{W}^{out}
\begin{bmatrix}
\boldsymbol{X}_{\textit{ho}}^{out} \\
\boldsymbol{X}_{\textit{he}}^{out}
\end{bmatrix}\right) 
\label{equation_out}
\end{align}

\section{Experiments}
Here, we evaluate DeltaGNN on 10 distinct datasets with varying homophilic ratios, densities, sizes, and topologies. For all datasets, the hyper-parameter fine-tuning was done using a grid-search on the validation set. We compare our DeltaGNN (implemented using GCN aggregations), with state-of-the-art GNN architectures such as GCN~\citep{GCN}, GIN~\citep{xu2018powerful}, GAT~\citep{velickovic2017graph}, rewiring algorithms~\citep{topping2021understanding}, heterophily-based methods~\citep{zhu2020beyond,pei2020geom}, and graph-transformers~\citep{shi2020masked,chen2022nagphormer}. Additional details on dataset details, training settings, and models can be found in the Appendix B.

\begin{itemize} \item \textbf{Cora, CiteSeer, and PubMed datasets.} We evaluate our framework on three scientific publications-citations datasets~\citep{mccallum2000automating,yang2016revisiting} with high homophilic ratios (\(\geq\) 0.5). These datasets include one graph each and the task is node-level multi-class classification across several fields of research.

\item \textbf{Cornell, Texas, and Wisconsin datasets.} We also use three webpage network datasets~\citep{pei2020geom} with low homophilic ratios (\(\leq\) 0.5). These datasets include one graph each and the task is node-level multi-class classification among several webpage categories.

\item \textbf{MedMNIST Organ-C and Organ-S datasets.} The MedMNIST Organ-C and Organ-S datasets~\citep{medmnistv2} include abdominal CT scan images (28 $\times$ 28 pixels) of liver tumors in coronal and sagittal views, respectively. The task for both is node-level multi-class classification among 11 types of liver tumors. Each image-based dataset is converted into a graph, with each node representing an image, and each node embedding being the vectorization of the 28 by 28 pixel intensity, resulting in vectors of length 784. The graph edges are derived from the cosine similarity of the node embeddings, similar to what as done in~\citep{adnel2023affordable,adnel2024falcon}. To reduce complexity, we sparsify these graphs using a sparsity threshold and convert them into unweighted graphs. Additionally, we define two degrees of density for each dataset, with up to $\sim$2.8 million edges.
\end{itemize}


\begin{table*}[h!]
  \centering
  \caption{Accuracy \(\pm\) std over 5 runs on six datasets with varying homophily.}
    \label{tab:benchmark_comparison}%
    \begin{tabular}{l|c|c|c|c|c|c|c}
    \multirow{1}[1]{*}{Methods} & \shortstack{CORA \\ \(\mathcal{H} = 0.81\)} & \shortstack{Citeseer \\\(\mathcal{H} = 0.74\)} & \shortstack{PubMed \\ \(\mathcal{H} = 0.80\)} & \shortstack{Cornell \\ \(\mathcal{H} = 0.30\)} & \shortstack{Texas \\ \(\mathcal{H} = 0.09\)} & \shortstack{Wisconsin \\ \(\mathcal{H} = 0.19\)} \\
    \midrule\midrule
    GCN~\citep{GCN}   & 85.12±0.42 & 76.10±0.32 & \textbf{88.26±0.48} & 37.84±2.70 & 70.81±2.96 & 55.69±2.63 \\
    GCN + \emph{rewiring(DC)} & 85.32±0.49  & \textbf{76.36±0.38} & 88.24±0.40 & 41.08±1.21 & 70.81±3.52 & 56.47±1.64 \\
    GCN + \emph{rewiring(EC)} & 84.92±0.52 & 75.80±0.46 & 88.24±0.25 & 50.54±0.00 & 69.19±4.10 & 52.16±5.65 \\
    GCN + \emph{rewiring(BC)} & 84.82±0.71 & 75.42±0.62 & OOT & 37.30±4.44 & 65.94±5.27 & 54.90±2.78 \\
    GCN + \emph{rewiring(CC)} & 85.06±0.26 & 75.44±0.51 & OOT & 37.30±6.45 & 69.73±2.96 & 54.90±2.78 \\
    GCN + \emph{rewiring(FC)} & 85.12±0.45 & 75.48±0.60 & 87.94±0.33 & 50.54±0.00 & 67.57±5.06 & 52.55±3.22 \\
    GCN + \emph{rewiring(RC)} & 84.90±0.62 & 75.94±0.43 & OOT & 31.35±11.40 & 71.34±2.42 & \textbf{56.86±4.80} \\
    GCN + \emph{rewiring(IFS)} & \textbf{85.36±0.21} & 75.92±0.22 & 87.66±0.88 & \textbf{50.54±0.00} & \textbf{71.35±1.48} & 54.51±3.51 \\
    \midrule
    GIN~\citep{xu2018powerful}  & 83.84±0.70 & 72.72±0.73 & 87.80±0.42 & \textbf{58.38±2.42} & 56.21±5.86 & 52.55±0.88 \\
    GIN + \emph{rewiring(DC)} & 83.76±0.43  & 72.84±0.59 & 87.94±0.64 & 48.11±7.50 & 60.54±4.52 & 53.33±2.15 \\
    GIN + \emph{rewiring(EC)} & \textbf{84.02±0.85} & \textbf{73.56±0.28} & \textbf{88.20±0.77} & 50.27±6.22 & 60.54±3.63 & 52.94±3.67 \\
    GIN + \emph{rewiring(BC)} & 83.66±0.79 & 73.02±0.58 & OOT & 55.67±4.10 & 58.38±4.10 & \textbf{55.29±3.51} \\
    GIN + \emph{rewiring(CC)} & 83.60±0.70 & 72.90±0.99 & OOT & 55.13±4.91 & 65.40±4.83 & 53.72±2.23 \\
    GIN + \emph{rewiring(FC)} & 83.16±0.93 & 73.50±0.84 & 87.30±0.37 & 56.22±5.54 & 58.92±4.83 & 49.80±2.97 \\
    GIN + \emph{rewiring(RC)} & 83.22±0.72 & 73.32±1.28 & OOT & 57.84±3.63 & \textbf{61.62±4.44} & 52.94±2.77 \\
    GIN + \emph{rewiring(IFS)} & 82.94±1.00 & 73.38±1.30 & 87.90±0.96 & 54.95±5.86 & 57.84±1.48 & 53.33±2.91 \\
    \midrule
    GAT~\citep{velickovic2017graph} & 85.42±0.95 & 78.08±0.26 & 85.48±0.58 & 38.38±4.83 & 64.32±2.26 & 51.37±1.64 \\
    GAT + \emph{rewiring(DC)} & 85.26±0.67  & 77.82±0.33 & 85.36±0.52 & 37.30±4.44  & 62.16±5.06 & 50.59±3.51 \\
    GAT + \emph{rewiring(EC)} & 85.30±0.87 & 77.54±0.31 & 85.34±0.57 & \textbf{38.92±4.52} & 60.00±5.20 & 50.59±1.64 \\
    GAT + \emph{rewiring(BC)} & 85.46±1.21 & 78.26±0.50 & OOT & 38.92±4.52 & \textbf{64.86±4.27} & 52.16±4.51\\
    GAT + \emph{rewiring(CC)} & 85.50±0.72 & 78.10±0.59 & OOT & 37.30±4.83 & 63.78±3.08 & \textbf{52.16±2.23} \\
    GAT + \emph{rewiring(FC)} & 86.00±0.87 & 77.86±0.77 & \textbf{85.60±0.27} & 37.30±4.01 & 58.92±10.71 & 50.20±4.29 \\
    GAT + \emph{rewiring(RC)} & 85.28±0.57 & \textbf{78.64±0.36} & OOT & 35.13±3.31 & 62.70±6.73 & 51.76±1.07 \\
    GAT + \emph{rewiring(IFS)} & \textbf{85.80±0.40} & 78.06±0.78 & 85.02±0.58 & 35.13±4.27 & 57.84±11.40 & 47.84±4.07 \\
    \midrule
    MPL~\citep{lecun2015deep} & 70.32±2.68 & 68.64±1.98 & 86.46±0.35 & 71.62±5.57 & \cellcolor[HTML]{9CEB85}77.83±5.24 & \cellcolor[HTML]{56EB2A}82.15±6.93 \\
    SDRF~\citep{topping2021understanding} & \cellcolor[HTML]{9CEB85}86.40±2.10 & 72.58±0.20 & OOT & 57.54±0.34 & 70.35±0.60 & 61.55±0.86 \\
    H2GCN~\citep{zhu2020beyond} & 83.48±2.29 & 75.16±1.48 & 88.86±0.45 & \cellcolor[HTML]{9CEB85}75.40±4.09 & \cellcolor[HTML]{56EB2A}79.73±3.25 & 77.57±4.11 \\
    GEOM-GCN~\citep{pei2020geom} & 84.10±1.12 & 76.28±2.06 & 88.13±0.67 & 54.05±3.87 & 67.57±5.35 & 68.63±4.92 \\
    UniMP~\citep{shi2020masked} & 84.18±1.39 & 75.00±1.59 & 88.56±0.32 & 66.48±12.5 & 73.51±8.44 & \cellcolor[HTML]{E7FFE0}79.60±5.41 \\
    NAGphormer~\citep{chen2022nagphormer} & 85.77±1.35 & 73.69±1.48 & 87.87±0.33 & 56.22±8.08 & 63.51±6.53 & 62.55±6.22 \\
    \midrule
    DeltaGNN - \emph{control} & 84.56±0.57  & 79.40±0.77  & 89.64±0.73 & 75.13±1.21 & 67.57±2.70 & 74.12±1.64 \\
    DeltaGNN - \emph{control} + \textit{DC} & 84.60±1.05 & \cellcolor[HTML]{56EB2A}\textbf{79.90±0.79} & \cellcolor[HTML]{E7FFE0}89.70±0.10 & \cellcolor[HTML]{56EB2A}\textbf{75.67±1.91} & 72.43±1.21 & 76.47±1.39 \\
    DeltaGNN - \emph{control} + \textit{EC} & 84.14±0.63 & 79.36±0.59 & 89.68±0.47 & 72.97±3.31 & 73.51±1.21 & 74.90±3.22 \\
    DeltaGNN - \emph{control} + \textit{BC} & 84.36±0.57 & 78.98±0.86 & OOT & 72.97±1.91
    & 70.81±3.52 & 74.51±2.77 \\
    DeltaGNN - \emph{control} + \textit{CC} & 84.54±0.93 & \cellcolor[HTML]{9CEB85}79.46±0.75 & OOT & 74.05±1.48 & 70.81±2.26 & 75.29±2.97 \\
    DeltaGNN - \emph{control} + \textit{FC} & 84.94±0.75 & 79.36±0.65 & \cellcolor[HTML]{56EB2A}\textbf{89.98±0.24} & 73.51±1.21 & 71.89±1.48 & 73.33±1.07 \\
    DeltaGNN - \emph{control} + \textit{RC} &  84.96±0.50 & 79.34±0.59 & OOT & 74.05±1.48 & 72.43±1.91 & 76.08±0.88 \\
    DeltaGNN \emph{constant} & \cellcolor[HTML]{E7FFE0}86.38±0.18 & 79.15±0.43 & \cellcolor[HTML]{9CEB85}89.73±0.31 & 70.27±4.10 & \cellcolor[HTML]{E7FFE0}\textbf{74.05±3.08} & 79.21±1.75 \\
    DeltaGNN \emph{linear} & \cellcolor[HTML]{56EB2A} \textbf{87.29±0.52} & \cellcolor[HTML]{E7FFE0}79.42±0.78 & 89.60±0.45 & \cellcolor[HTML]{E7FFE0}75.27±3.31 & 72.97±3.82 & \cellcolor[HTML]{9CEB85}\textbf{80.00±0.88} \\
    \end{tabular}%
\vspace{0.3cm}

\footnotesize \textbf{Notes}: Results better than their counterparts have a more intense shade of green. OOT indicates an out-of-time error (compute time \(\geq\) 30 mins).
\end{table*}

\begin{figure*}[h!]
\centering
\includegraphics[width = 2\columnwidth]{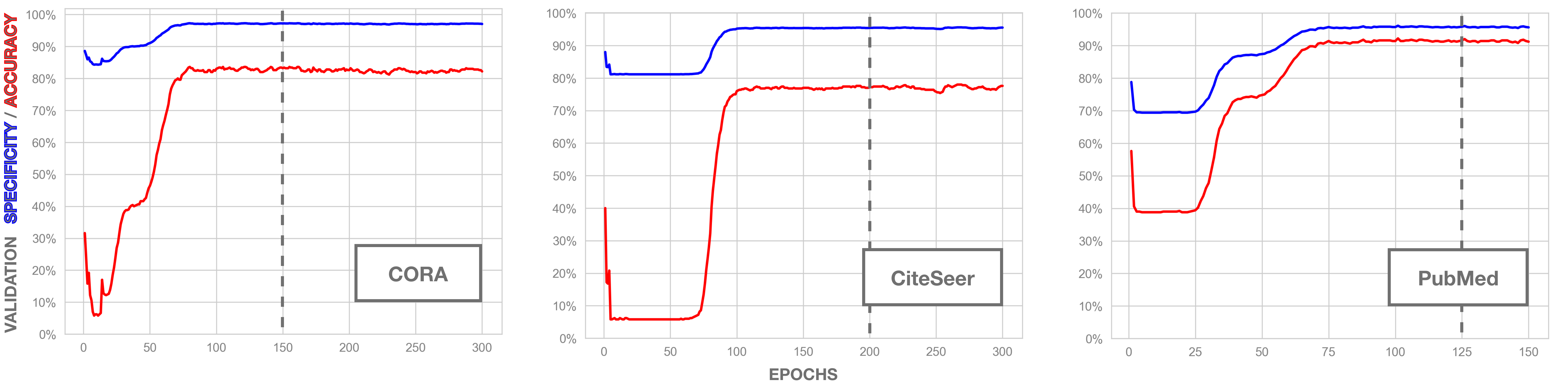}
\caption{Validation accuracy (red) and specificity (blue) convergence during training epochs for DeltaGNN \emph{linear} on three datasets with varying homophilic ratios. The dashed line indicates the observed convergence point.}
\label{fig:validation}
\end{figure*}

\subsection{Generalizability Benchmark}
In this section, we assess the efficacy of our models on six datasets with varying homophilic ratios. As most GNNs rely on graph homophily, this experiment helps us gauge our model dependence on this assumption and its generalizability across different domains. As shown in Table \ref{tab:benchmark_comparison}, DeltaGNN outperforms all state-of-the-art methods across four out of six datasets, with an average accuracy increase of \textbf{1.23\%}. When comparing the performance of different connectivity measures, we notice that both topology-based and geometry-based connectivity measures fail to offer a generalizable solution across different homophilic scenarios. On the other hand, DeltaGNN is the only approach that consistently yields the best results, consolidating our claim that the IFS is a one-for-all connectivity measure. Furthermore, while expensive measures such as betweenness centrality (BC), closeness centrality (CC), and Ollivier-Ricci curvature (RC) encountered out-of-time (OOT) errors on the PubMed dataset, our IFS, along with degree centrality (DC), eigenvector centrality (EC), and Forman-Ricci curvature (FC), proved scalable enough to process all six datasets. When analyzing the results obtained from datasets with low homophilic rates, such as Texas and Wisconsin, we observe that the performance gap between DeltaGNN and rewiring approaches increases, demonstrating the effectiveness of IFC in improving the homophilic rate of a graph through iterative edge filtering. A more rigorous experiment on this observation is presented in Section \ref{experiment}, where we compare the rate of change of the graph homophilic ratio during the edge filtering using different connectivity measures.

\subsection{Scalability Benchmark}
DeltaGNN exhibits the lowest epoch time in three out of five datasets, in some cases being up to three times faster than its ablations. The introduction of IFC in DeltaGNN reduced the average epoch time by \textbf{30.61\%} when compared to the worst-performing model (see Table \ref{tab:benchmark_delta_1}). However, DeltaGNN shows higher-than-average epoch time on the CiteSeer dataset due to the large size of feature vectors. Additionally, thanks to IFC, DeltaGNN is the only LRI architecture with no preprocessing overhead. In terms of memory consumption, DeltaGNN uses approximately twice the memory of standard GCN models. This is due to its implementation, which employs GCN aggregation layers along with dual homophilic and heterophilic aggregations, inherently requiring more parameters. Nevertheless, GAT showed the highest memory footprint, being the only model that failed to process both dense datasets. Graph-transformers and heterophily-based methods proved to be at least as computationally expensive as GAT, failing to process the largest graphs.

The major advantage of using the IFS lies in its synergy with GNNs. Since all GNNs process the input graph through message-passing, it is possible to calculate the score with almost no additional computational cost, as the score for any node has constant time complexity. The additional overhead for computing the score is \(O(|\mathcal{V}|d_{\mathcal{V}}T)\), depending on the distance function used. Typically, we have \( T \ll d_{\mathcal{V}} \ll |\mathcal{V}|\), which results in an average time complexity of \(O(|\mathcal{V}|)\). To the best of our knowledge, this offers the lowest time complexity among all major connectivity measures proposed in the literature (see Table \ref{tab:connectivity_measures}).

The memory complexity of the IFS is also linear with respect to the number of nodes, \(O(|\mathcal{V}|)\), since both the variance and mean can be computed iteratively using Welford's method~\citep{efanov2021welford}, without the need to store all previous delta values.

\begin{table}[H]
  \centering
  \caption{Computational resource comparison. The best connectivity measure with respect to epoch time is used.}
  \vspace{0.2cm}
  \scalebox{1}{
    \begin{tabular}{|l|cc|cc|cc}
    \toprule
    \multirow{2}[1]{*}{Methods} & \multicolumn{2}{c|}{CORA} & \multicolumn{2}{c|}{CiteSeer} \\
          & \shortstack{\(\Delta\) GPU \\ Memory}   & \shortstack{\(\Delta\) Epoch \\ Time}  & \shortstack{\(\Delta\) GPU \\ Memory}   & \shortstack{\(\Delta\) Epoch \\ Time} \\
    \midrule\midrule
    GCN + \emph{filtering}  & \textbf{-69.99\%} & \cellcolor[HTML]{9CEB85}-31.62\% & \textbf{-63.49\%} & \cellcolor[HTML]{56EB2A}\textbf{-40.90\%} \\

    GIN + \emph{filtering} & -25.58\% & \cellcolor[HTML]{E7FFE0}-30.84\% & -30.72\% & \cellcolor[HTML]{9CEB85}-40.26\% \\

    GAT + \emph{filtering} & -55.10\% & 0\% & -54.11\% & 0\% \\
    DeltaGNN - control & 0\% & -19.63\% & 0\% & \cellcolor[HTML]{E7FFE0}-33.48\% \\
    DeltaGNN & 0\% & \cellcolor[HTML]{56EB2A}\textbf{-42.06\%} & 0\% & -14.64\% \\
    \bottomrule
    \end{tabular}%
    }
    
\vskip 0.2cm

\scalebox{1}{
    \begin{tabular}{|l|cc|cc|cc}
    \toprule
    \multirow{2}[1]{*}{Methods} & \multicolumn{2}{c|}{PubMed} & \multicolumn{2}{c|}{Organ-S} \\
          & \shortstack{\(\Delta\) GPU \\ Memory}   & \shortstack{\(\Delta\) Epoch \\ Time}  & \shortstack{\(\Delta\) GPU \\ Memory}   & \shortstack{\(\Delta\) Epoch \\ Time} \\
    \midrule\midrule
    GCN + \emph{filtering}  & \textbf{-65.68\%} & \cellcolor[HTML]{9CEB85}-6.69\% & \textbf{-70.65\%} & \cellcolor[HTML]{9CEB85}-17.27\% \\

    GIN + \emph{filtering} & -52.53\% & \cellcolor[HTML]{E7FFE0}-6.62\% & -60.28\% & \cellcolor[HTML]{E7FFE0}-17.21\% \\

    GAT + \emph{filtering} & 0\% & -3.49\% & 0\% & -6.93\% \\
    DeltaGNN - control & -16.48\% & 0\% & -30.78\% & 0\% \\
    DeltaGNN & -16.48\% & \cellcolor[HTML]{56EB2A}\textbf{-56.16\%} & -29.47\% & \cellcolor[HTML]{56EB2A}\textbf{-24.10\%} \\
    \bottomrule
    \end{tabular}%
    }
\vskip 0.2cm
  \scalebox{1}{
    \begin{tabular}{|l|cc|cc}
    \toprule
    \multirow{2}[1]{*}{Methods} & \multicolumn{2}{c|}{Organ-S \emph{dense}}\\
          & \(\Delta\) GPU Memory   & \(\Delta\) Epoch Time \\
    \midrule\midrule
    GCN + \emph{filtering}  & \textbf{-57.63\%} & \cellcolor[HTML]{56EB2A}\textbf{-41.50\%}  \\

    GIN + \emph{filtering} & -44.19\% & \cellcolor[HTML]{9CEB85}-41.45\% \\

    GAT + \emph{filtering} & OOM & OOM \\
    DeltaGNN - control & -3.20\% & 0\% \\
    DeltaGNN & 0\% & \cellcolor[HTML]{E7FFE0}-16.11\% \\
    \bottomrule
    \end{tabular}%
    }
  \label{tab:benchmark_delta_1}%

\vspace{0.2cm}

\footnotesize \textbf{Notes}: The rate of change is calculated relative to the worst-performing variation. Results that outperform their counterparts are shaded with a more intense green. OOM indicates an out-of-memory error.
\end{table}

\begin{table}[H]
\centering
\caption{Time complexity of common connectivity measures~\citep{wandelt2020complex,sia2019ollivier} and our novel IFC.}
\label{tab:connectivity_measures}
\begin{tabular}{c|c|c}
\toprule
\multirow{4}{*}{\begin{tabular}[c]{@{}c@{}}\shortstack{Topological \\ Measures}\end{tabular}} & \shortstack{\textbf{Connectivity} \\ \textbf{Measure}} & \shortstack{\textbf{Time}\\ \textbf{Complexity}} \\ 
\cmidrule(lr){2-3}
 & Degree Centrality (DC) & $O(|\mathcal{E}|)$ \\ 
 & Eigenvector Centrality (EC) &$O(|\mathcal{V}| + |\mathcal{E}|)$\\
 & Betweenness Centrality (BC) & $O(|\mathcal{V}||\mathcal{E}|)$ \\  
 & Closeness Centrality (CC) & $O(|\mathcal{V}||\mathcal{E}|)$ \\ 
\cmidrule(lr){1-3}
\multirow{2}{*}{\begin{tabular}[c]{@{}c@{}}\shortstack{Geometrical \\ Measures}\end{tabular}} & Ollivier-Ricci Curvature (OC) & \(O(|\mathcal{V}||\mathcal{E}|)\) \\  
 & Forman-Ricci Curvature (FC) & \(O(|\mathcal{E}|)\) \\  
\cmidrule(lr){1-3}
\multirow{2}{*}{\begin{tabular}[c]{@{}c@{}}\shortstack{Embeddings-Based \\ Measures}\end{tabular}} & Information Flow Score (IFS) & \(O(|\mathcal{V}|)\) \\ \\
\bottomrule
\end{tabular}
\vspace{0.1cm}
\end{table}

\subsection{Numerical Experiment} \label{experiment}

We now illustrate how our edge-filtering based on the information flow score can improve the homophily and connectivity of the graph, resulting in an increase in the mean node score. We experiment with a small graph containing fourteen nodes belonging to three distinct classes, including bottlenecked and heterophilic edges. As shown in Figure \ref{fig:scores}, these edges can be easily detected using the IFS values. We then remove the edges adjacent to the nodes with the lowest scores. The filtered graph, as illustrated in Figure \ref{fig:scores_after}, demonstrates higher homophily, fewer bottlenecks, and consequently, a higher mean node score.

\begin{figure*}[h!]
\centering

\includegraphics[width = 2\columnwidth]{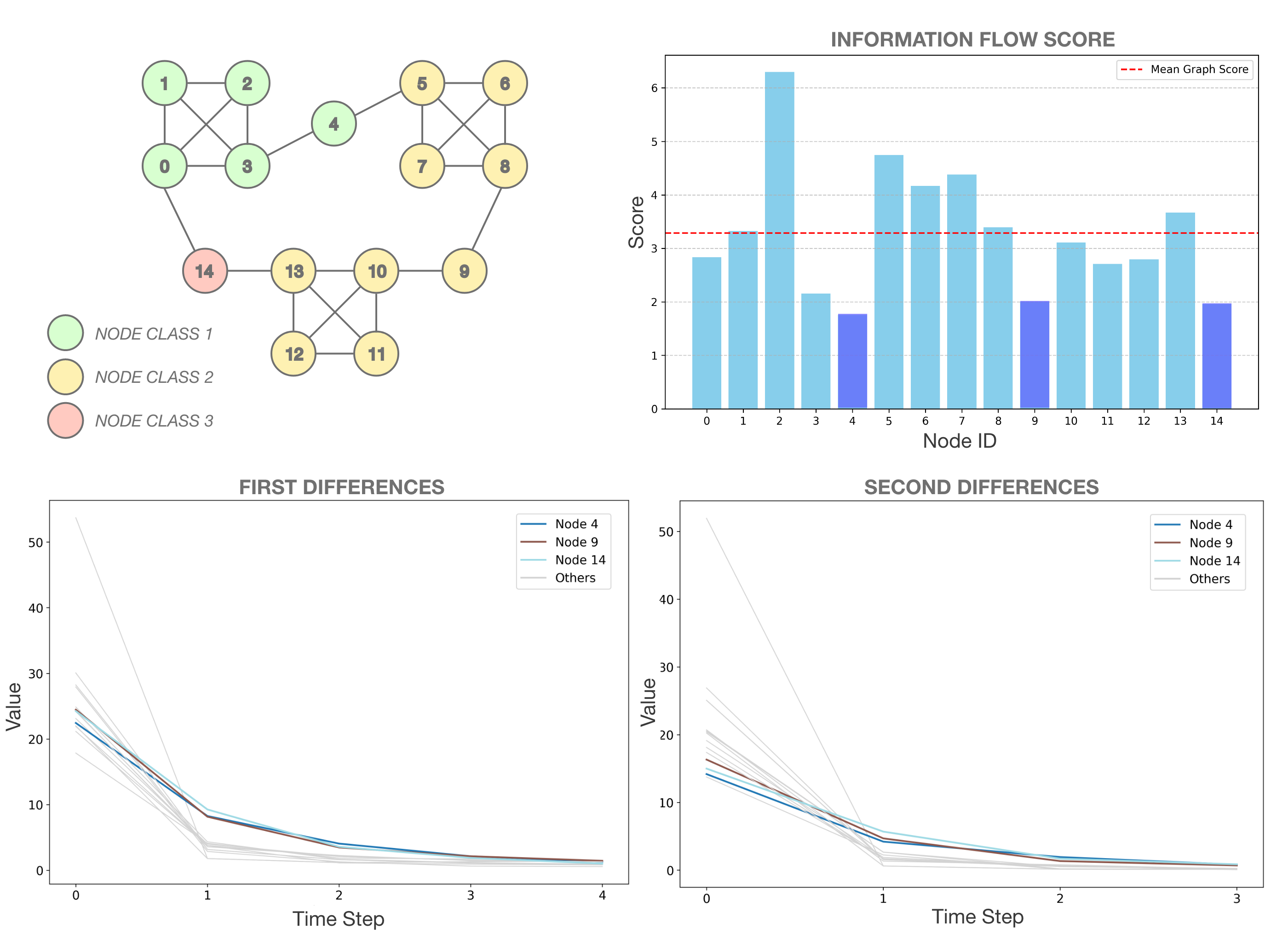}
\caption{\emph{Illustrations of first delta embeddings, second delta embeddings, and information flow score on a small graph with bottlenecks and heterophilic edges.} We set \(m = 1\) and \(l = 1\) to detect heterophilic bottlenecks, using the Euclidean distance as the distance metric \(d\). The samples used to generate the graph are medical images from the MedMNIST Organ-C dataset \citep{medmnistv2}. As observed from the plots, nodes 4, 9, and 14 can be easily distinguished as they have very low scores.}
\label{fig:scores}

\vspace{1.5cm}

\includegraphics[width = 2\columnwidth]{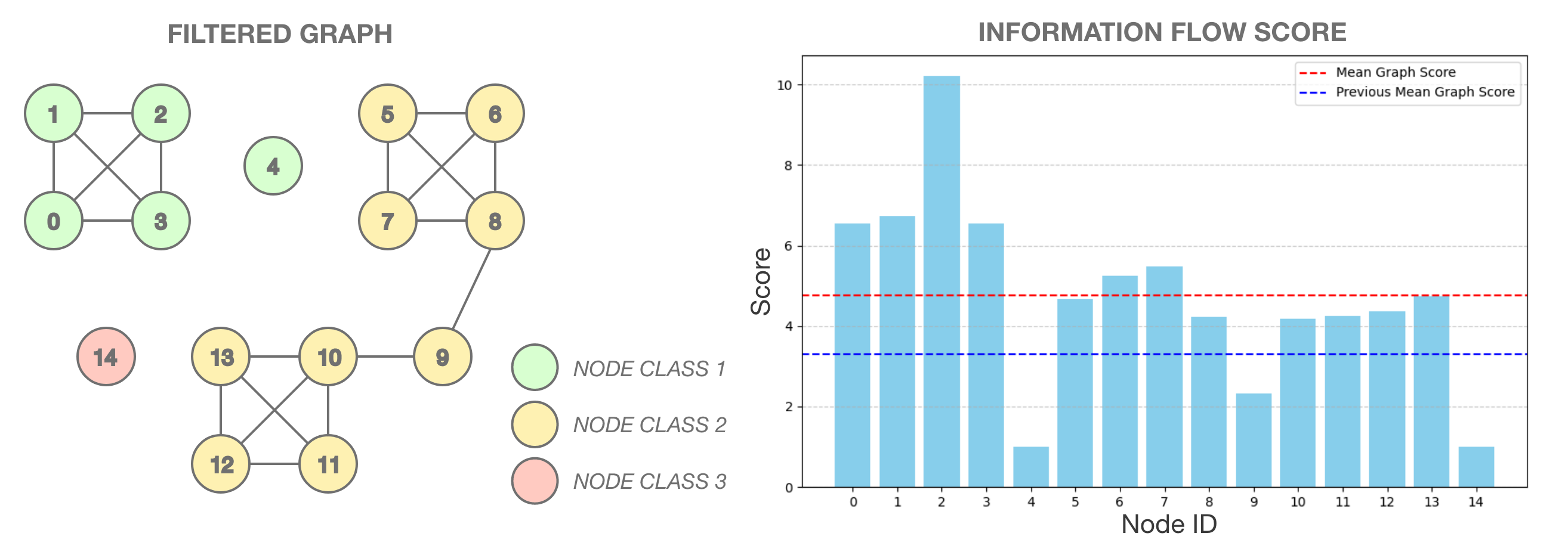}
\caption{\emph{Filtered graph with updated information flow score.} After removing bottleneck edges and heterophilic node interactions, the mean node score increased significantly, demonstrating the effectiveness of the edge filtering.}
\label{fig:scores_after}

\end{figure*}

\subsection{Connectivity Measure Comparison} \label{comparison}

In this section, we directly compare the efficacy of different connectivity measures during topological edge-filtering and heterophilic graph condensation. Figure \ref{fig:distributions} and Table \ref{tab:benchmark_delta_2} illustrate how the density distribution of the node homophilic rate changes throughout these two stages using various connectivity measures on the CORA dataset. Following the example of recent works~\citep{mao2024demystifying}, we define the homophilic ratio to account for both feature similarity and label similarity to more accurately depict the performance disparity between different methods. Let \(e\) be an edge connecting the nodes \(u\) and \(v\) with feature vectors \(\mathbf{X}_u\) and \(\mathbf{X}_v\), respectively. Let \(\Phi\) be a function mapping each feature vector to its associated label, and \(\delta\) a function returning 1 if the two inputs coincide and 0 otherwise. Then the homophilic rate of \(e\) is defined as:

\[
\mathcal{H}_e = 0.5 \cdot \frac{\mathbf{X}_u \cdot \mathbf{X}_v}{\|\mathbf{X}_u\| \|\mathbf{X}_v\|} + 0.5 \cdot \delta(\Phi(\mathbf{X}_u), \Phi(\mathbf{X}_v))
\]

where the left term denotes the cosine similarity of the two feature representations. 

During topological edge-filtering, the IFS shows the best result, with an increase in the graph homophilic ratio of \textbf{8.65\%}. This demonstrates that our novel connectivity measure is highly effective at reducing over-smoothing and increasing the graph homophily. In contrast, all other embedding-agnostic measures showed significantly worse results, proving that our approach of leveraging the rate of change of the embeddings throughout the message-passing successfully improves the topological edge-filtering. When analyzing the rate of change in the homophilic rate during heterophilic graph condensation, we observe that the IFS yields the lowest variation. The highest rate of change is observed with the Ollivier-Ricci curvature, as illustrated in Figure \ref{fig:distributions} by the substantial decrease in the number of heterophilic nodes with a homophilic rate close to 0.1 during curvature-based edge filtering. Nevertheless, we can also observe that the performance of betweenness and closeness centrality during heterophilic edge condensation differs substantially. Given their theoretical similarity, this suggests that the role of the connectivity measure in heterophilic graph condensation might be marginal. Therefore, further experiments are needed to determine whether this disparity could impact the model performance.

\begin{table*}[htbp]
  \centering
  \caption{Variations in the homophilic rate during topological edge-filtering and heterophilic graph condensation using different connectivity measures.}
  \scalebox{1}{
    \begin{tabular}{|l|ccccc|ccccc}
    \toprule
    \multirow{2}[1]{*}{Methods} & \multicolumn{5}{c|}{CORA} \\
          & \shortstack{Original \\ Graph}  & \shortstack{Homophilic \\ Filtered \\ Graph} & \shortstack{\(\Delta\) \\ Homophilic \\ Rate}  & \shortstack{Heterophilic \\ Condensed \\ Graph} & \shortstack{\(\Delta\) \\ Homophilic \\ Rate} \\
    \midrule\midrule
    Degree Centrality  & 0.6676 & \cellcolor[HTML]{9CEB85}0.7115 & \cellcolor[HTML]{9CEB85}+6.72\% & 0.981 & -85.28\% \\
    Eigenvector Centrality  & 0.6676 & 0.6937 & +4.05\% & 0.1122 & -83.17\% \\
    Betweenness Centrality  & 0.6676 & 0.7057 & +5.85\% & 0.905 & -86.42\% \\
    Closeness Centrality  & 0.6676 & 0.7043 & +5.64\% & 0.1244 & -81.34\% \\
    Forman-Ricci Curvature  & 0.6676 & 0.7094 & +6.40\% & 0.955 & -85.67\% \\
    Ollivier-Ricci Curvature  & 0.6676 &  \cellcolor[HTML]{E7FFE0}0.7112 & \cellcolor[HTML]{E7FFE0}+6.67\% & 0.767 & \textbf{-88.49\%} \\
    Information Flow Score  & 0.6676 & \cellcolor[HTML]{56EB2A}0.7244 & \cellcolor[HTML]{56EB2A}\textbf{+8.65\%} & 0.1560 & -76.60\% \\
    \bottomrule
    \end{tabular}%
    }
  \label{tab:benchmark_delta_2}%

\vspace{0.3cm}

\footnotesize \textbf{Notes}: Results are bolded if they represent the best performance overall. Results that are better than their counterparts are shaded in progressively darker green.
\end{table*}

\begin{figure*}[h]
\centering
\includegraphics[width = 1.7\columnwidth]{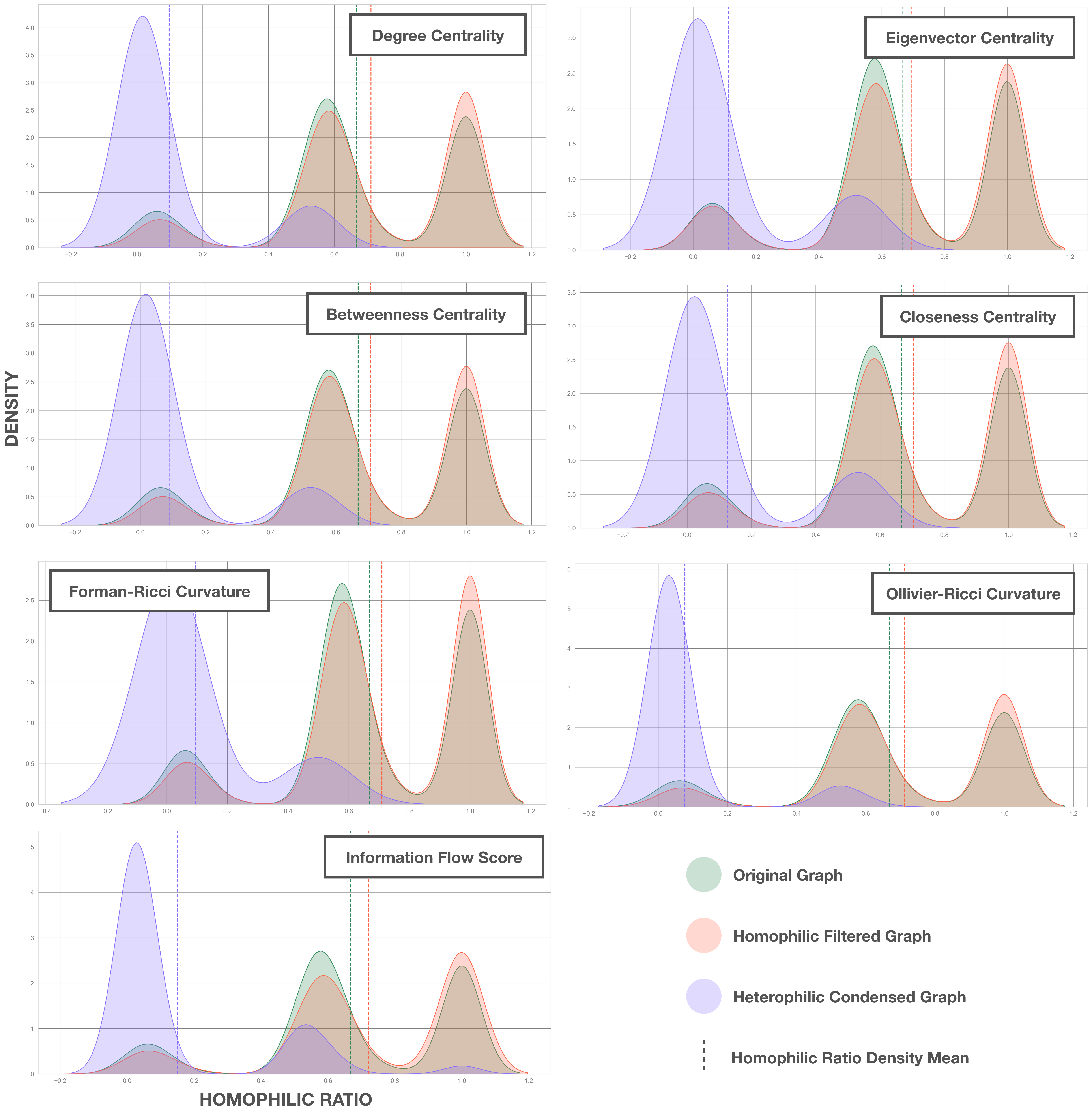}
\caption{Homophilic ratio density distribution shifts during topological edge-filtering and heterophilic graph condensation using seven distinct connectivity measures. The dashed lines indicate the mean density for each corresponding measure and density.}
\label{fig:distributions}
\end{figure*}

\begin{sidewaystable*}[htbp]
  \centering
  \caption{Prediction result comparison of various methods on four datasets with varying density.}
  \scalebox{1}{
    \begin{tabular}{l|cc|cc|cc|cc}
    \multirow{2}[1]{*}{Methods} & \multicolumn{2}{c|}{Organ-S} & \multicolumn{2}{c|}{Organ-S \emph{(dense)}} & \multicolumn{2}{c|}{Organ-C} & \multicolumn{2}{c}{Organ-C \emph{(dense)}} \\
          & Accuracy   & Specificity & Accuracy   & Specificity & Accuracy   & Specificity & Accuracy   & Specificity \\
    \midrule\midrule
    GCN~\citep{GCN} & 59.25±0.40 & 95.92±0.04 & 57.80±0.41 & 95.78±0.04 & \textbf{77.25±0.29} & \textbf{97.72±0.03} & 74.65±0.62 & 97.46±0.06 \\
    GCN + \emph{rewiring(DC)} & 59.02±0.20 & 95.90±0.02 & \textbf{58.25±0.42} & \textbf{95.82±0.04} & 76.79±0.29 & 97.68±0.03 & 74.92±0.19 & 97.49±0.02 \\
    GCN + \emph{rewiring(EC)} & 58.96±0.44 & 95.90±0.04 & 57.94±0.43 & 95.79±0.04 & 77.24±0.25 & 97.72±0.02 & \textbf{75.17±0.31} & \textbf{97.51±0.03} \\
    GCN + \emph{rewiring(BC)} & OOT & OOT & OOT & OOT & OOT & OOT & OOT & OOT \\
    GCN + \emph{rewiring(CC)} & OOT & OOT & OOT & OOT & OOT & OOT & OOT & OOT \\
    GCN + \emph{rewiring(FC)} & \textbf{59.48±0.31} & \textbf{95.95±0.03} & OOT & OOT & 77.10±0.21 & 97.71±0.02 & OOT & OOT \\
    GCN + \emph{rewiring(RC)} & OOT & OOT & OOT & OOT & OOT & OOT & OOT & OOT \\
    GCN + \emph{rewiring(IFS)} & 59.11±0.50 & 95.91±0.05 & 57.46±0.44 & 95.75±0.04 & 76.74±0.15 & 97.67±0.01 & 73.80±0.32 & 97.38±0.03 \\
    \midrule
    GIN~\citep{xu2018powerful}  & 61.12±0.48 & 96.11±0.05 & 61.47±0.58 & 96.15±0.06 & 77.44±0.55 & 97.74±0.05 & 77.68±2.20 & 97.77±0.22 \\
    GIN + \emph{rewiring(DC)} & \textbf{61.85±1.30} & \textbf{96.18±0.13} & \textbf{62.31±0.29} & \textbf{96.23±0.03} & 77.31±2.03 & 97.73±0.20 & \textbf{78.48±0.59} & \textbf{97.85±0.06} \\
    GIN + \emph{rewiring(EC)} & 61.12±0.23 & 96.11±0.02 & 61.90±0.67 & 96.18±0.07 & 78.10±0.54 & 97.81±0.05 & 78.06±0.67 & 97.80±0.07 \\
    GIN + \emph{rewiring(BC)} & OOT & OOT & OOT & OOT & OOT & OOT & OOT & OOT \\
    GIN + \emph{rewiring(CC)} & OOT & OOT & OOT & OOT & OOT & OOT & OOT & OOT \\
    GIN + \emph{rewiring(FC)} & 61.60±1.23 & 96.16±0.12 & OOT & OOT & \textbf{78.94±0.91} & \textbf{97.89±0.09} & OOT & OOT \\
    GIN + \emph{rewiring(RC)} & OOT & OOT & OOT & OOT & OOT & OOT & OOT & OOT \\
    GIN + \emph{rewiring(IFS)} & 61.52±0.34 & 96.15±0.03 & 62.21±0.71 & 96.22±0.07 & 78.22±1.21 & 97.82±0.12 & 77.52±0.91 & 97.75±0.09 \\
    \midrule
    GAT~\citep{velickovic2017graph}  & 52.57±0.18 & 95.25±0.02 & OOM & OOM & 69.27±0.63 & 96.93±0.06 & OOM & OOM \\
    GAT + \emph{rewiring(DC)} & 53.24±0.26 & 95.32±0.03 & OOM & OOM & 69.38±1.09 & 96.94±0.11 & OOM & OOM \\
    GAT + \emph{rewiring(EC)} & 53.03±1.32 & 95.30±0.13 & OOM & OOM & \textbf{69.84±0.05} & \textbf{96.98±0.00} & OOM & OOM \\
    GAT + \emph{rewiring(BC)} & OOT & OOT & OOT & OOT & OOT & OOT & OOT & OOT \\
    GAT + \emph{rewiring(CC)} & OOT & OOT & OOT & OOT & OOT & OOT & OOT & OOT \\
    GAT + \emph{rewiring(FC)} & \textbf{53.39±0.16} & \textbf{95.34±0.02} & OOT & OOT & 68.88±0.86 & 96.89±0.09 & OOT & OOT \\
    GAT + \emph{rewiring(RC)} & OOT & OOT & OOT & OOT & OOT & OOT & OOT & OOT \\
    GAT + \emph{rewiring(IFS)} & 51.49±1.24 & 95.15±0.12 & OOM & OOM & 68.52±2.12 & 96.85±0.21 & OOM & OOM \\
    \midrule
    DeltaGNN - \emph{control} & \cellcolor[HTML]{E7FFE0}62.62±0.08 & \cellcolor[HTML]{E7FFE0}96.26±0.01 & 62.34±0.43 & 96.23±0.04 & \cellcolor[HTML]{56EB2A}\textbf{81.44±0.16} & \cellcolor[HTML]{56EB2A}\textbf{98.14±0.01} & \cellcolor[HTML]{9CEB85}80.53±0.68 & \cellcolor[HTML]{9CEB85}98.05±0.07 \\
    DeltaGNN - \emph{control} + \textit{DC} & 62.63±0.28 & 96.26±0.03 & \cellcolor[HTML]{56EB2A}\textbf{63.10±0.26} & \cellcolor[HTML]{56EB2A}\textbf{96.31±0.03} & \cellcolor[HTML]{9CEB85}80.59±1.00 & \cellcolor[HTML]{9CEB85}98.06±0.10 & \cellcolor[HTML]{56EB2A}\textbf{80.55±1.00} & \cellcolor[HTML]{56EB2A}\textbf{98.05±0.10} \\
    DeltaGNN - \emph{control} + \textit{EC} & \cellcolor[HTML]{56EB2A}\textbf{62.90±0.40} & \cellcolor[HTML]{56EB2A}\textbf{96.29±0.04} & \cellcolor[HTML]{9CEB85}62.87±0.10 & \cellcolor[HTML]{9CEB85}96.28±0.01 & 79.70±1.35 & 97.97±0.13 & 79.20±0.84 & 97.91±0.08 \\
    DeltaGNN - \emph{control} + \textit{BC} & OOT & OOT & OOT & OOT & OOT & OOT & OOT & OOT \\
    DeltaGNN - \emph{control} + \textit{CC} & OOT & OOT & OOT & OOT & OOT & OOT & OOT & OOT \\
    DeltaGNN - \emph{control} + \textit{FC} & \cellcolor[HTML]{9CEB85}62.69±0.13 & \cellcolor[HTML]{9CEB85}96.27±0.01 & OOT & OOT & \cellcolor[HTML]{9CEB85}80.59±1.10 & \cellcolor[HTML]{9CEB85}98.06±0.11 & OOT & OOT \\
    DeltaGNN - \emph{control} + \textit{RC} & OOT & OOT & OOT & OOT & OOT & OOT & OOT & OOT \\
    DeltaGNN \emph{constant} & \cellcolor[HTML]{9CEB85}62.69±0.76 & \cellcolor[HTML]{9CEB85}96.27±0.08 & \cellcolor[HTML]{E7FFE0}62.54±0.46 & \cellcolor[HTML]{E7FFE0}96.25±0.05 & \cellcolor[HTML]{E7FFE0}79.71±0.30 & \cellcolor[HTML]{E7FFE0}97.97±0.03 & 79.13±0.24 & 97.91±0.02 \\
    DeltaGNN \emph{linear} & 62.09±0.37 & 96.21±0.04 & 62.04±0.23 & 96.20±0.02 & 79.23±0.33 & 97.91±0.03 & \cellcolor[HTML]{E7FFE0}79.26±0.27 & \cellcolor[HTML]{E7FFE0}97.92±0.03 \\
    \end{tabular}%
    }
  \label{tab:benchmark_comparison_2}%

\vspace{0.1cm}
\footnotesize \textbf{Notes}: Results are bolded if they are the best variation for a certain model. Green-shaded cells highlight the best three results overall. Results better than their counterparts have a more intense shade of green. OOT indicates an out-of-time error which is generated when the computation of the connectivity measure overruns 30 minutes. OOM indicates an out-of-memory error.
\end{sidewaystable*}

\subsection{Additional Experiments: size-varying and density-varying graphs} \label{additional}
When processing large and dense graphs, we observe a significant performance improvement with DeltaGNN compared to GAT~\citep{velickovic2017graph}, GIN~\citep{xu2018powerful}, and GCN~\citep{GCN}. DeltaGNN consistently delivered the best results across all four datasets. Additionally, while GAT encountered out-of-memory errors, and most connectivity measures resulted in out-of-time errors when processing the two \emph{dense} datasets, our proposed models implementing the IFS measure successfully completed all benchmarks and achieved the best average performance. Overall, DeltaGNN performed well on the MedMNIST~\citep{medmnistv2} datasets, with an average accuracy increase of \textbf{+0.92\%}. Graph-transformers and heterophily-based methods proved to be at least as computationally expensive as GAT, also failing to process the largest graphs. For this reason, and due to hardware limitations, we only compare rewiring approaches and standard GNN baselines in this benchmark (see Table \ref{tab:benchmark_comparison_2}).

\section{Conclusion}
In this work, we introduced the concepts of \emph{information flow score (IFS)} and \emph{information flow control (IFC)}, novel approaches for jointly mitigating the effects of over-smoothing and over-squashing during message passing in node classification tasks. To demonstrate the effectiveness of our methodology, we developed DeltaGNN, the first GNN architecture to incorporate IFC for detecting both short-range and long-range node interactions. We provided rigorous theoretical evidence and extensive experimentation to support our claims. Our empirical results show that our methodology outperforms popular state-of-the-art methods. As a future direction, we plan to explore alternative implementations of DeltaGNN with different aggregation paradigms and non-GNN components and examine the applicability of our approach to graph-level and edge-level learning tasks.

\appendices

\section{Graph-Rewiring Algorithms on Over-smoothing} \label{rewiring}
In this section, we demonstrate why common graph-rewiring algorithms fail to address over-smoothing in certain graph structures. Typically, these algorithms utilize a connectivity measure to detect dense areas within the graph, sparsifying them to slow down over-smoothing, while relaxing bottlenecks by adding new edges to reduce over-squashing. We independently illustrate these two rewiring strategies on a small graph in Figure \ref{fig:rewiring}.

In general, reducing the number of edges in a graph slows the convergence of node representations during message-passing. Popular rewiring algorithms exploit this fact and aim to sparsify dense areas of the graph, which are the first to experience smoothing during neighborhood aggregation. This approach merely slows down the aggregation process in the GNN, postponing the onset of over-smoothing while also decelerating the node representation learning. To directly mitigate over-smoothing, feature aggregation between nodes of different classes must be prevented by removing heterophilic edges. However, relaxing bottlenecks by adding new edges can significantly decrease the homophilic ratio of the graph and exacerbate over-smoothing (see Figure \ref{fig:rewiring}). This explains why many rewiring algorithms either fail to address or even worsen the performance of the underlying models. 

While common rewiring algorithms rely on connectivity measures (e.g., centrality or curvature-based measures) that consider only the graph topology, ignoring node embeddings and thus graph homophily, our method leverages the rate of change in node embeddings to detect heterophilic edges and successfully prevent over-smoothing.

\begin{figure*}[h]
\begin{center}
\includegraphics[width = 2\columnwidth]{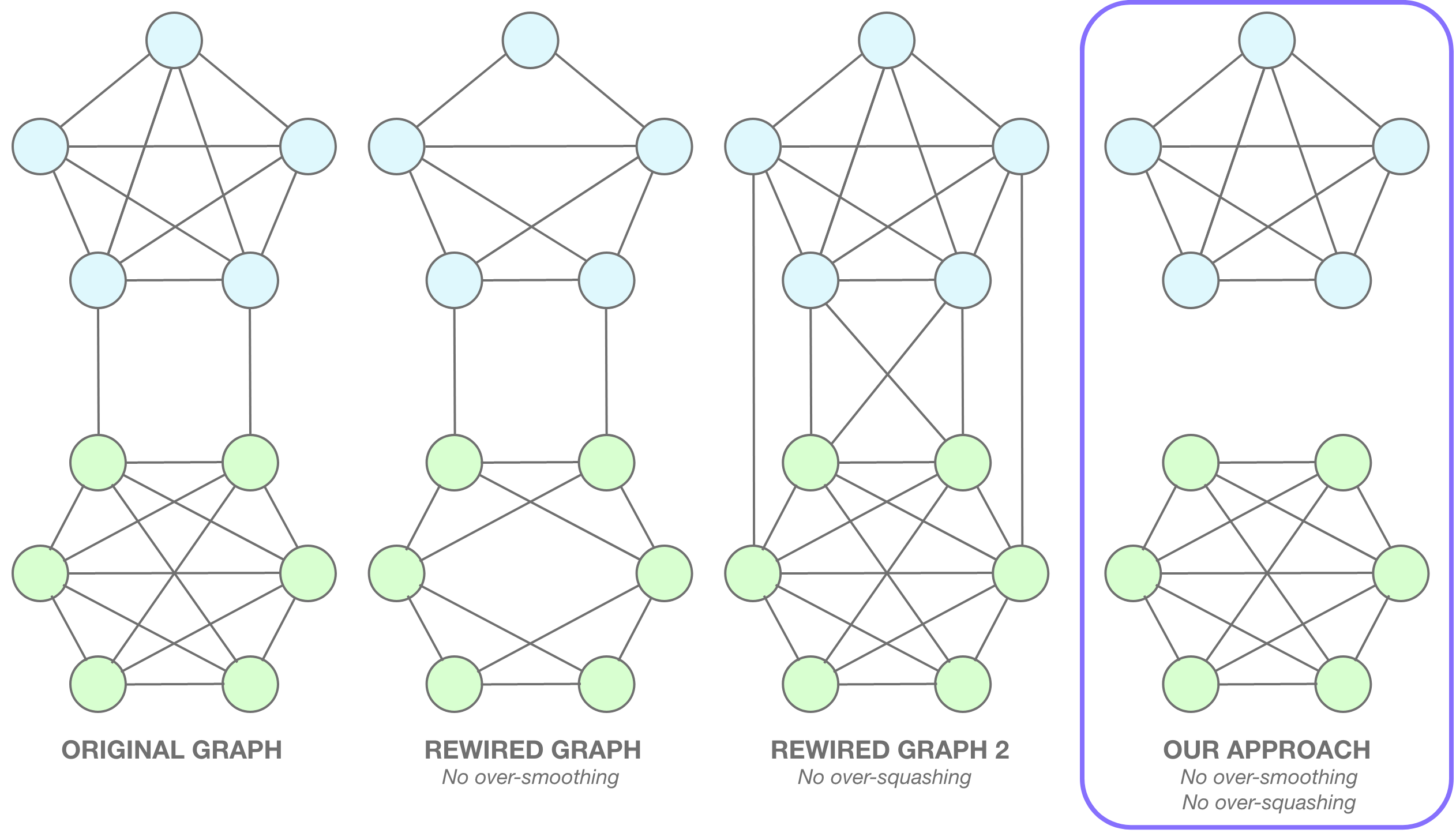}
\end{center}
\caption{Comparison of graph-rewiring techniques for alleviating over-smoothing and over-squashing. Distinct node colors represent different node classes.}
\label{fig:rewiring}
\end{figure*}

\section{Experimental Details: Datasets, Models, and Additional Results} \label{experiments}

In this work, we evaluated DeltaGNN on 10 distinct datasets. All dataset details are summarized in Table \ref{tab:dataset_details}. This section describes the models used in this benchmark, as well as the hardware and software configurations employed to run the experiments.

\begin{sidewaystable*}[htbp]
  \centering
  \caption{Dataset details of all datasets.}
  \scalebox{1}{
    \begin{tabular}{l|c|c|c|c|c|c|c|c|c|c|c}
          & \shortstack{\# of \\ Graphs} & \shortstack{\# of \\ Nodes} & \shortstack{\# of \\ Edges} & \shortstack{\# of \\ Features} & \shortstack{\# of \\ Labels} & \shortstack{Task \\ Level} & \shortstack{Task \\ Type} & \shortstack{Training \\ Type} & \shortstack{Training \\ Set Size} & \shortstack{Validation \\ Set Size} & \shortstack{Test \\ Set Size} \\
    \midrule\midrule
    \textbf{Planetoid}~\citep{yang2016revisiting} & & & & & & & & & & \\
    Cora \(\mathcal{H} = 0.81\) & 1 & 2708 & 5278 & 1433 & 7 & Node & Multi-class & Inductive & 1208 & 500 & 1000 \\
    CiteSeer \(\mathcal{H} = 0.74\) & 1 & 3327 & 4552 & 3703 & 6 & Node & Multi-class & Inductive & 18217 & 500 & 1000 \\
    PubMed \(\mathcal{H} = 0.80\) & 1 & 19717 & 44324 & 500 & 3 & Node & Multi-class & Inductive & 1827 & 500 & 1000 \\
    \midrule
    \textbf{WebKB}~\citep{pei2020geom} & & & & & & & & & & \\
    Cornell \(\mathcal{H} = 0.30\) & 1 & 183 & 295 & 1703 & 5 & Node & Multi-class & Inductive & 87 & 59 & 37 \\
    Texas \(\mathcal{H} = 0.09\) & 1 & 183 & 309 & 1703 & 5 & Node & Multi-class & Inductive & 87 & 59 & 37 \\
    Wisconsin \(\mathcal{H} = 0.19\) & 1 & 251 & 499 & 1703 & 5 & Node & Multi-class & Inductive & 120 & 80 & 51 \\
    \midrule
    \textbf{MedMNIST}~\citep{medmnistv2} & & & & & & & & & & \\
    Organ-S & 1 & 25221 & 1276046 & 784 & 11 & Node & Multi-class & Inductive & 13940 & 2452 & 8829 \\
    Organ-S \emph{(dense)} & 1 & 25221 & 2494750 & 784 & 11 & Node & Multi-class & Inductive & 13940 & 2452 & 8829 \\
    Organ-C & 1 & 23660 & 1241622 & 784 & 11 & Node & Multi-class & Inductive & 13000 & 2392 & 8268 \\
    Organ-C \emph{(dense)} & 1 & 23660 & 2809204 & 784 & 11 & Node & Multi-class & Inductive & 13000 & 2392 & 8268 \\
    \end{tabular}%
    }
  \label{tab:dataset_details}
  \caption{Hyperparameter configurations of all datasets.}
  \scalebox{1}{
    \begin{tabular}{l|c|c|c|c|c|c|c|c|c}
          & \shortstack{\# of \\ Layers} & \shortstack{Hidden \\ Channels \\ ()thers)} & \shortstack{Hidden \\ Channels \\ (GAT)} & \shortstack{Hidden \\ Channels \\ (DeltaGNN)} & \shortstack{Dropouts} & \shortstack{\# Removed \\ Edges \\ (Edge-Filter)} & \shortstack{\# Max \\ Communities} & \shortstack{Learning \\ Rate} & \shortstack{Optimizer} \\
    \midrule\midrule
    \textbf{Planetoid}~\citep{yang2016revisiting} &  &  &  &  & &  &  & & \\
    Cora \(\mathcal{H} = 0.81\) & 3 & 2048 & 2048 & 2048 & 0.5 & 40/30 & 20 & 0.0005 & ADAM \\
    CiteSeer \(\mathcal{H} = 0.74\) & 3 & 2048 & 2048 & 2048 & 0.5 & 40/30 & 20 & 0.0005 & ADAM \\
    PubMed \(\mathcal{H} = 0.80\) & 3 & 1024 & 256 & 1024 & 0.5 & 400/200 & 10 & 0.0005 & ADAM \\
    \midrule
    \textbf{WebKB}~\citep{pei2020geom} & & & & & & &  & & \\
    Cornell \(\mathcal{H} = 0.30\) & 3 & 2048 & 2048 & 2048 & 0.5 & 10/5 & 20 & 0.0005 & ADAM \\
    Texas \(\mathcal{H} = 0.09\) & 3 & 2048 & 2048 & 2048 & 0.5 & 10/5 & 20 & 0.0005 & ADAM \\
    Wisconsin \(\mathcal{H} = 0.19\) & 3 & 2048 & 2048 & 2048 & 0.5 & 10/5 & 20 & 0.0005 & ADAM \\
    \midrule
    \textbf{MedMNIST}~\citep{medmnistv2} & & & & & & & & & \\
    Organ-S & 3 & 1024 & 256 & 1024 & 0.5 & 1000/5000 & 500 & 0.0005 & ADAM \\
    Organ-S \emph{(dense)} & 3 & 1024 & 256 & 1024 & 0.5 & 3000/15000 & 500 & 0.0005 & ADAM \\
    Organ-C & 3 & 2048 & 2048 & 2048 & 0.5 & 1000/5000 & 500 & 0.0005 & ADAM \\
    Organ-C \emph{(dense)} & 3 & 2048 & 2048 & 2048 & 0.5 & 3000/15000 & 500 & 0.0005 & ADAM \\
    \end{tabular}%
    }
  \label{tab:dataset_hyperparameters}
\end{sidewaystable*}

\begin{sidewaystable*}[htbp]
  \centering
  \caption{Comparison of computational resources on datasets of increasing size and density.}
  \scalebox{0.87}{
    \begin{tabular}{l|ccc|ccc|ccc|ccc|ccc}
    \multirow{2}[1]{*}{Methods} & \multicolumn{3}{c|}{CORA} & \multicolumn{3}{c|}{CiteSeer} & \multicolumn{3}{c|}{PubMed} & \multicolumn{3}{c|}{Organ-S} & \multicolumn{3}{c}{Organ-S \emph{(dense)}} \\
          & \shortstack{GPU \\ Memory} & \shortstack{Preproc. \\ Time} & \shortstack{Epoch \\ Time} & \shortstack{GPU \\ Memory} & \shortstack{Preproc. \\ Time} & \shortstack{Epoch \\ Time} & \shortstack{GPU \\ Memory} & \shortstack{Preproc. \\ Time} & \shortstack{Epoch \\ Time} & \shortstack{GPU \\ Memory} & \shortstack{Preproc. \\ Time} & \shortstack{Epoch \\ Time} & \shortstack{GPU \\ Memory} & \shortstack{Preproc. \\ Time} & \shortstack{Epoch \\ Time}  \\
    \midrule\midrule
    GCN~\citep{GCN}   & \textbf{161.81} & \textbf{0.00} & 0.0014 & \textbf{267.60} & \textbf{0.00} & 0.0014 & \textbf{779.53} & \textbf{0.00} & 0.0015 & \textbf{641.31} & \textbf{0.00} & 0.0016 & \textbf{655.54} & \textbf{0.00} & 0.0017 \\
    GCN + \emph{rewiring(DC)}  & \textbf{161.81} & 12.7536 & 0.0439 & \textbf{267.60} & 16.0516 & \textbf{0.0549} & \textbf{779.53} & 123.0887 & 0.8221 & \textbf{641.31} & 163.7771 & 0.8205 & \textbf{655.54} & 167.4163 & \textbf{0.8388} \\
    GCN + \emph{rewiring(EC)}  & \textbf{161.81} & 12.7966 & 0.0440 & \textbf{267.60} & 16.0762 & 0.0014 & \textbf{779.53} & 128.6635 & 0.8592 & \textbf{641.31} & 165.3687 & 0.8284 & \textbf{655.54} & 168.4002 & 0.8437 \\
    GCN + \emph{rewiring(BC)}  & \textbf{161.81} & 36.6763 & 0.1236 & \textbf{267.60} & 38.3571 & 0.1326 & OOT & OOT & OOT & OOT & OOT & OOT & OOT & OOT & OOT \\
    GCN + \emph{rewiring(CC)}  & \textbf{161.81} & 16.5316 & 0.0565 & \textbf{267.60} & 18.8500 & 0.0642 & OOT & OOT & OOT & OOT & OOT & OOT & OOT & OOT & OOT \\
    GCN + \emph{rewiring(FC)}  & \textbf{161.81} & 13.2862 & 0.0457 & \textbf{267.60} & 16.0841 & 0.0550 & \textbf{779.53} & 123.0187 & 0.8216 & \textbf{641.31} & 544.1153 & 2.7222 & OOT & OOT & OOT \\
    GCN + \emph{rewiring(RC)}  & \textbf{161.81} & 14.2089 & 0.0488 & 267.60 & 16.7696 & 0.0573 & OOT & OOT & OOT & OOT & OOT & OOT & OOT & OOT & OOT \\
    GCN + \emph{rewiring(IFS)}  & \textbf{161.81} & 13.5477 & 0.0465 & \textbf{267.60} & 17.3902 & 0.0594 & \textbf{779.53} & 137.0603 & 0.9152 & \textbf{641.31} & 186.4637 & 0.9339 & \textbf{655.54} & 188.1741 & 0.9426 \\
    \midrule
    GIN~\citep{xu2018powerful}   & 401.25 & \textbf{0.00} & 0.0019 & 507.72 & \textbf{0.00} & 0.0020 & 1078.47 & \textbf{0.00} & 0.0021 & 867.77 & \textbf{0.00} & 0.0022 & 876.75 & \textbf{0.00} & 0.0024 \\
    GIN + \emph{rewiring(DC)}  & 401.25 & 12.7536 & 0.0444 & 507.72 & 16.0516 & 0.0555 & 1078.47 & 123.0887 & 0.8227 & 867.77 & 163.7771 & 0.8211 & 876.75 & 167.4163 & 0.8395 \\
    GIN + \emph{rewiring(EC)}  & 401.25 & 12.7966 & 0.0445 & 507.72 & 16.0762 & 0.0020 & 1078.47 & 128.6635 & 0.8598 & 867.77 & 165.3687 & 0.8290 & 876.75 & 168.4002 & 0.8444 \\
    GIN + \emph{rewiring(BC)}  & 401.25 & 36.6763 & 0.1241 & 507.72 & 38.3571 & 0.1332 & OOT & OOT & OOT & OOT & OOT & OOT & OOT & OOT & OOT \\
    GIN + \emph{rewiring(CC)}  & 401.25 & 16.5316 & 0.0570 & 507.72 & 18.8500 & 0.0648 & OOT & OOT & OOT & OOT & OOT & OOT & OOT & OOT & OOT \\
    GIN + \emph{rewiring(FC)}  & 401.25 & 13.2862 & 0.0462 & 507.72 & 16.0841 & 0.0556 & 1078.47 & 123.0187 & 0.8222 & 867.77 & 544.1153 & 2.7228 & OOT & OOT & OOT \\
    GIN + \emph{rewiring(RC)}  & 401.25 & 14.2089 & 0.0493 & 507.72 & 16.7696 & 0.0579 & OOT & OOT & OOT & OOT & OOT & OOT & OOT & OOT & OOT \\
    GIN + \emph{rewiring(IFS)}  & 401.25 & 13.5477 & 0.0470 & 507.72 & 17.3902 & 0.0600 & 1078.47 & 137.0603 & 0.9158 & 867.77 & 186.4637 & 0.9345 & 876.75 & 188.1741 & 0.9432 \\
    \midrule
    GAT~\citep{velickovic2017graph}   & 242.09 & \textbf{0.00} & 0.0217 & 336.31 & \textbf{0.00} & 0.0394 & 2271.71 & \textbf{0.00} & 0.0297 & 2184.78 & \textbf{0.00} & 0.1042 & OOM & OOM & OOM \\
    GAT + \emph{rewiring(DC)}  & 242.09 & 12.7536 & 0.0642 & 336.31 & 16.0516 & 0.0929 & 2271.71 & 123.0887 & 0.8503 & 2184.78 & 163.7771 & 0.9231 & OOM & OOM & OOM \\
    GAT + \emph{rewiring(EC)}  & 242.09 & 12.7966 & 0.0643 & 336.31 & 16.0762 & 0.0930 & 2271.71 & 128.6635 & 0.8874 & 2184.78 & 165.3687 & 0.9310 & OOM & OOM & OOM \\
    GAT + \emph{rewiring(BC)}  & 242.09 & 36.6763 & 0.1439 & 336.31 & 38.3571 & 0.1706 & OOT & OOT & OOT & OOT & OOT & OOT & OOT & OOT & OOT \\
    GAT + \emph{rewiring(CC)}  & 242.09 & 16.5316 & 0.0768 & 336.31 & 18.8500 & 0.1022 & OOT & OOT & OOT & OOT & OOT & OOT & OOT & OOT & OOT \\
    GAT + \emph{rewiring(FC)}  & 242.09 & 13.2862 & 0.0660 & 336.31 & 16.0841 & 0.0930 & 2271.71 & 123.0187 & 0.8498 & 2184.78 & 544.1153 & 2.8248 & OOT & OOT & OOT \\
    GAT + \emph{rewiring(RC)}  & 242.09 & 14.2089 & 0.0691 & 336.31 & 16.7696 & 0.0953 & OOT & OOT & OOT & OOT & OOT & OOT & OOT & OOT & OOT \\
    GAT + \emph{rewiring(IFS)}  & 242.09 & 13.5477 & 0.0668 & 336.31 & 17.3902 & 0.0974 & 2271.71 & 137.0603 & 0.9434 & 2184.78 & 186.4637 & 1.0365 & OOM & OOM & OOM \\
    \midrule
    DeltaGNN - \emph{control}  & 539.10 & 15.4632 & 0.0540 & 732.90 & 18.3549 & 0.0639 & 1891.55 & 142.3362 & 0.9518 & 1512.34 & 213.6147 & 1.0711 & 1520.61 & 245.9796 & 1.643 \\
    DeltaGNN - \emph{control} + \textit{DC}  & 539.10 & 14.4934 & 0.0508 & 732.90 & 18.3611 & 0.0639 & 1891.55 & 131.6359 & 0.8805 & 1512.34 & 188.6704 & 0.9463 & 1520.61 & 214.6011 & 1.4338 \\
    DeltaGNN - \emph{control} + \textit{EC}  & 539.10 & 14.7367 & 0.0516 & 732.90 & 17.7273 & 0.0618 & 1891.55 & 131.9083 & 0.8823 & 1512.34 & 197.7688 & 0.9918 & 1520.61 & 226.2949 & 1.5117 \\
    DeltaGNN - \emph{control} + \textit{BC}  & 539.10 & 38.9384 & 0.1323 & 732.90 & 41.0206 & 0.1394 & OOT & OOT & OOT & OOT & OOT & OOT & OOT & OOT & OOT \\
    DeltaGNN - \emph{control} + \textit{CC}  & 539.10 & 18.5498 & 0.0643 & 732.90 & 20.8089 & 0.0720 & OOT & OOT & OOT & OOT & OOT & OOT & OOT & OOT & OOT \\
    DeltaGNN - \emph{control} + \textit{FC}  & 539.10 & 15.1687 & 0.0531 & 732.90 & 18.0004 & 0.0627 & 1891.55 & 135.6264 & 0.9071 & 1512.34 & 555.8835 & 2.7824 & OOT & OOT & OOT \\
    DeltaGNN - \emph{control} + \textit{RC}  & 539.10 & 15.4131 & 0.0539 & 732.90 & 18.4731 & 0.0643 & OOT & OOT & OOT & OOT & OOT & OOT & OOT & OOT & OOT \\
    DeltaGNN \emph{constant}  & 539.14 & \textbf{0.00} & \textbf{0.0372} & 732.87 & \textbf{0.00} & 0.0793 & 1897.26 & \textbf{0.00} & \textbf{0.3860} & 1540.95 & \textbf{0.00} & \textbf{0.7528} & 1570.87 & \textbf{0.00} & 1.2028 \\
    DeltaGNN \emph{linear}  & 539.14 & \textbf{0.00} & \textbf{0.0372} & 732.87 & \textbf{0.00} & 0.0793 & 1897.26 & \textbf{0.00} & \textbf{0.3860} & 1540.95 & \textbf{0.00} & \textbf{0.7528} & 1570.87 & \textbf{0.00} & 1.2028 \\
    \end{tabular}%
    }
  \label{tab:benchmark_resource}%

\vspace{0.1cm}
\footnotesize Notes: Results are \textbf{bolded} if they are the best result for a certain dataset. OOT indicate an out-of-time error which is generated when the computation of the connectivity measure overruns 30 minutes. OOM indicates an out-of-memory error. The GPU memory is in MB, the epoch and preprocessing times are in seconds.
\end{sidewaystable*}

\subsection{Evaluation Models} \label{ex_1}
For GCN~\citep{GCN}, GIN~\citep{xu2018powerful}, and GAT~\citep{velickovic2017graph}, we include seven variations that incorporate an edge-rewiring module, which filters edges based on a specific connectivity measure. Specifically, it removes a fixed number of edges from dense areas of the graph to alleviate over-smoothing and removes bottlenecks to prevent over-squashing. We experiment with the following measures: degree centrality (DC), eigenvector centrality (EC), betweenness centrality (BC), closeness centrality (CC), Forman-Ricci curvature (FC), Ollivier-Ricci curvature (RC), and the information flow score (IFS). We also propose two variations of DeltaGNN implementing different functions \(K(t,\theta)\); a \emph{constant} function \(K = \theta\), and a \emph{linear} function: 

\[K = \left(\frac{\theta}{T-1}\right) \cdot (t-1)\]

where \(T\) denotes the number of layers in the architecture. This last equation describes a line passing through the points \((t = 1, K = 0)\) and \((t = T, K = \theta)\). Using this function, we ensure that fewer edges are removed in the initial layers and increasingly more are removed in the later ones since \(K\) is increasing. This is a desirable property of \(K\), as we expect the quality of the IFS to improve with an increasing number of aggregations. Additionally, we include an ablation of DeltaGNN without the IFC mechanism, where edge-filtering is done before homophilic aggregation rather than in parallel (resulting in higher time complexity). We also include seven versions of this model, each implementing different connectivity measures. 

The hyperparameters used for each model are detailed in Table \ref{tab:dataset_hyperparameters} for reproducibility. To ensure a fair comparison, the number of layers and hidden channels are kept constant across models, with reductions made only in cases of out-of-memory errors. This ensures a balanced comparison of the computational resources used by each method. Fine-tuning has been conducted using a grid-search methodology on the validation set for the following parameters and values: learning rate (0.0001, 0.0005, 0.001, 0.005), edges to remove (values are dataset-specific), maximum number of communities (5, 10, 20, 50, 100, 500), number of layers (3 to 6), and hidden dimensions (100, 256, 512, 1024, 2048).

\subsection{Experiment Configurations} \label{ex_2}
This section describes the hardware and software configurations used to run the experiments. Our experiments were conducted on a consumer-grade workstation with the following specifications: Intel Core i7-10700 2.90GHz CPU, dual-channel 16GB DDR4 memory clocked at 3200 MHz, and an Nvidia GeForce GTX 1050 Ti GPU with 4GB GDDR5 video memory. The system ran on a Linux-based operating system (Ubuntu 22.04.4 LTS) with NVIDIA driver version 535.183.01 and CUDA toolkit version 12.2. Our implementation uses Python 3.10.12, PyTorch 2.2.0~\citep{paszke2017automatic} (with CUDA 11.8), and Torch Geometric 2.5.1~\citep{fey2019fast}. The results presented are based on 95\% confidence intervals over 5 runs, and all datasets were trained inductively. The training pipeline includes an early stopping mechanism with a patience counter to prevent overfitting.



\begin{IEEEbiography}[{\includegraphics[width=1in,height=1.25in,clip,keepaspectratio]{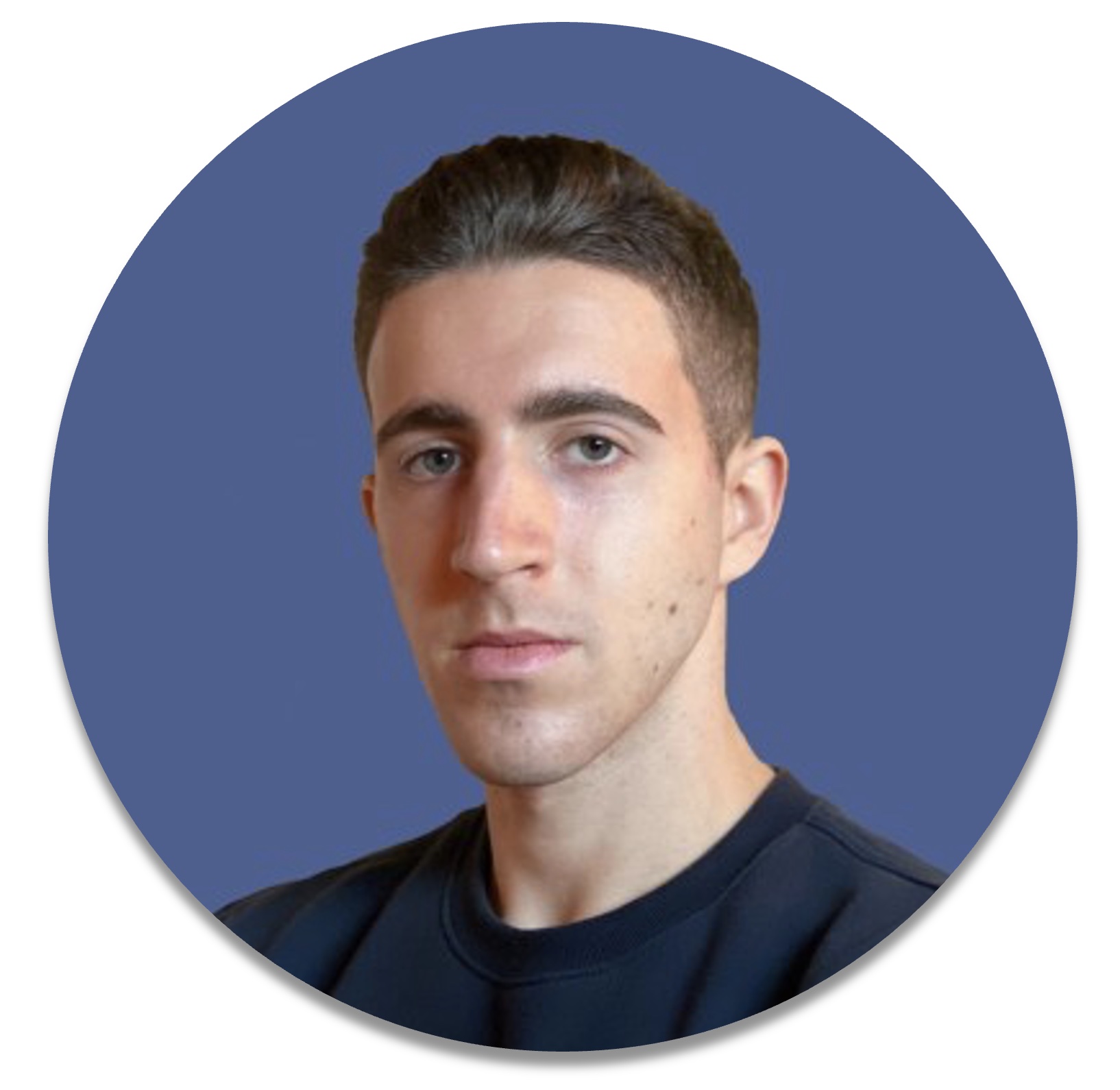}}]{Kevin Mancini} received his B.S. degree in Computer Engineering from the Alma Mater Studiorum – University of Bologna, Italy, in 2018, and his M.S. degree in Computer Science from Imperial College London, United Kingdom, in 2024, where he was awarded the "Splunk Prize for Excellence in MSc Individual Project" for the best individual project in data science and machine learning within the department. He is currently pursuing an M.S. degree in Applied Mathematics and Theoretical Physics at Sapienza University, Rome, Italy. His research focuses on machine learning and deep learning on graphs.
\end{IEEEbiography}

\begin{IEEEbiography}[{\includegraphics[width=1in,height=1.25in,clip,keepaspectratio]{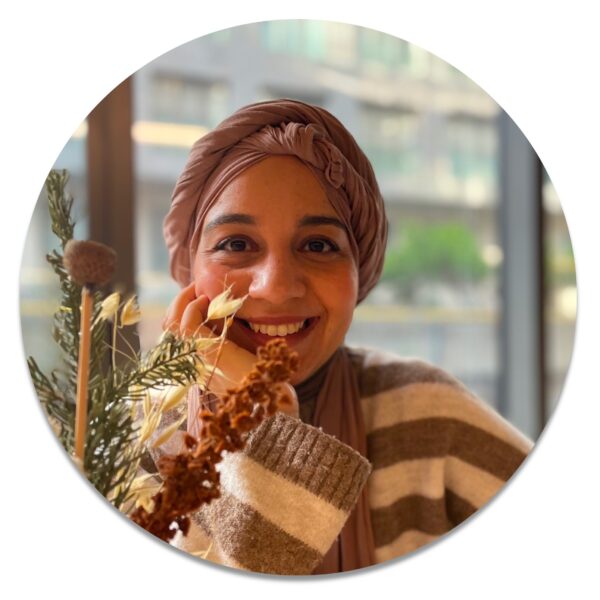}}]{Islem Rekik} is the Director of the Brain And SIgnal Research and Analysis (BASIRA) laboratory (\url{http://basira-lab.com/}) and an Associate Professor at Imperial College London (Innovation Hub I-X).  She is the awardee of two prestigious international research fellowships. In 2019, she was awarded the 3-year prestigious TUBITAK 2232 for Outstanding Experienced Researchers Fellowship and in 2020 she became a Marie Sklodowska-Curie fellow under the European Horizons 2020 program. Together with BASIRA members, she conducted more than 100 cutting-edge research projects cross-pollinating AI and healthcare —with a sharp focus on brain imaging and network neuroscience. She is also a co/chair/organizer of more than 25 international first-class conferences/workshops/competitions (e.g., Affordable AI 2021-22, Predictive AI 2018-2024, Machine Learning in Medical Imaging 2021-24, WILL competition 2021-23). She is a member of the organizing committee of MICCAI 2023 (Vancouver), 2024 (Marrakesh) and South-Korea (2025). She will serve as the General Co-Chair of MICCAI 2026 in Abu Dhabi. In addition to her 160+ high-impact publications, she is a strong advocate of equity, inclusiveness and diversity in AI and research. She is the former president of the Women in MICCAI (WiM), and the co-founder of the international RISE Network to Reinforce Inclusiveness \& diverSity and Empower minority researchers in Low-Middle Income Countries (LMIC). 

\end{IEEEbiography}

\vspace{11pt}

\vfill

\end{document}